\definecolor{pearThree}{HTML}{E74C3C}
\definecolor{pearcomp}{HTML}{B97E29}
\definecolor{pearDark}{HTML}{2980B9}
\definecolor{pearDarker}{HTML}{1D2DEC}
\newtheorem{definition}{Definition}
\newtheorem{proposition}{Proposition}
\newtheorem{theorem}{Theorem}
\newtheorem{lemma}{Lemma}
\newtheorem{remark}{Remark}
\title{Towards Instance-Optimality in Online \\ PAC Reinforcement Learning}
\author[1]{Aymen Al-Marjani}
\author[ ]{Andrea Tirinzoni}
\author[2]{Emilie Kaufmann}
\affil[1]{UMPA, ENS Lyon, Lyon, France}
\affil[2]{Univ. Lille, CNRS, Inria, Centrale Lille, UMR 9189 - CRIStAL, Lille, France}
\date{}
\begin{document}

\maketitle

% For TOC in appendix (https://tex.stackexchange.com/a/419290)
\doparttoc % Tell to minitoc to generate a toc for the parts
\faketableofcontents % Run a fake tableofcontents command for the partocs

\begin{abstract}
Several recent works have proposed instance-dependent upper bounds on the number of episodes needed to identify, with probability $1-\delta$, an $\varepsilon$-optimal policy in finite-horizon tabular Markov Decision Processes (MDPs). These upper bounds feature various complexity measures for the MDP, which are defined based on different notions of sub-optimality gaps. However, as of now, no {lower bound} 
%on the sample complexity 
has been established to assess the optimality of any of these complexity measures, except for the special case of MDPs with deterministic transitions. In this paper, we propose the first instance-dependent lower bound on the sample complexity required for the PAC identification of a near-optimal policy in any tabular episodic MDP. Additionally, we demonstrate that the sample complexity of the PEDEL algorithm of \cite{Wagenmaker22linearMDP} closely approaches this lower bound. Considering the intractability of PEDEL, we formulate an open question regarding the possibility of achieving our lower bound using a computationally-efficient algorithm.
\end{abstract}

\section{Introduction}
We consider the online Probably Approximately Correct Reinforcement Learning (PAC RL) problem, in which an agent sequentially interacts with an environment modeled as a Markov Decision Process (MDP), with the goal of learning a near-optimal policy as quickly as possible.
More precisely, given a precision $\epsilon \ge 0$ and a risk parameter $\delta \in (0,1)$, the agent is required to return a policy $\widehat{\pi}$ whose value is within $\epsilon$ of the value of the optimal policy, with probability at least $1-\delta$. The agent's performance is evaluated through its \emph{sample complexity}, defined as the number of interactions with the environment needed to output such a policy $\widehat{\pi}$. 

%This problem has been mostly studied in two different settings: in discounted MDPs, in which the value of a policy is the expected (infinite) sum of rewards discounted by a factor $\gamma \in (0,1)$, and in finite-horizon (or episodic) MDPs, in which the value is the expected sum of rewards up to a given horizon $H$. In both cases, the performance of an algorithm is evaluated through its \emph{sample complexity}, defined as the number of interactions with the environment (or the number of episodes, in the episodic setting) needed to output such a policy $\widehat{\pi}$. 

%Since its introduction by \cite{Fiechter94}, this PAC RL problem has been extensively investigated from a minimax point of view for both episodic and infinite-horizon discounted MDPs, with an emphasis on the complexity of learning with a generative model \citep{azar13GeneJournal,sidford18,agarwal20gene}, and in the more realistic setting where interaction is allowed only through trajectories \citep{dann15PAC,Dann2019certificates,Kaufmann21RFE,Menard21RFE}.

Since its introduction by \cite{Fiechter94}, this problem has been extensively investigated from a \emph{minimax} point of view in two different settings: discounted MDPs \citep{azar13GeneJournal,sidford18,agarwal20gene}, in which the value of a policy is the expected (infinite) sum of rewards discounted by a factor $\gamma \in (0,1)$, and finite-horizon (or episodic) MDPs \citep{dann15PAC,Dann2019certificates,Kaufmann21RFE,Menard21RFE}, in which the value is the expected sum of rewards up to a given horizon $H$. Notably, in the finite-horizon setting with $S$ states, $A$ actions, and horizon $H$, \cite{dann15PAC} proved that any PAC RL agent must play at least $\Omega(SAH^2 \log(1/\delta)/\epsilon^2)$ episodes to identify an $\epsilon$-optimal policy in the \emph{worst-case}. Their lower bound was derived under the assumption of time-homogeneous rewards and transitions, while a lower bound of $\Omega(SAH^3 \log(1/\delta)/\epsilon^2)$ episodes was later derived by \cite{domingues2021episodic} for the time-inhomogeneous case. There exist algorithms with sample complexity matching these lower bounds \citep{Dann2019certificates,Menard21RFE}.

Unfortunately, minimax optimality is not informative about the performance of an algorithm under different MDPs of the same size $(H,S,A)$. For instance, let us imagine a first MDP with deterministic transitions and a tree structure which has a single optimal trajectory whose rewards are all considerably higher than the rewards in any other trajectory. Let us also consider a second MDP in which all actions yield exactly the same reward, but this information is unknown to the agent beforehand. One would naturally expect the PAC RL task to be much easier in the first MDP where a few episodes should suffice to detect that the policy following the good trajectory is optimal.
%``stands from the crowd''. 
In the second one, however, no reasonable algorithm can \emph{confidently state} that a policy is $\epsilon$-optimal before having estimated uniformly well (with $\epsilon$-precision) the value of all other policies.
%stop before it has gathered enough observations from all regions of the MDP to estimate all policies with $\epsilon$-precision. 

%This motivates a recent line of works which try to go beyond minimax optimality by proposing adaptive algorithms that enjoy \emph{instance-dependent} guarantees. 
This motivates a recent line of works focused on designing adaptive algorithms with \emph{instance-dependent} guarantees, {i.e., sample complexity bounds featuring some characteristics of the underlying MDP that go beyond its size as in minimax results. These characteristics have been expressed with different notions of \emph{sub-optimality gaps}}. The first algorithm of this kind is BESPOKE \citep{zanette2019almost}, which was proposed for discounted MDPs. Its gap-based sample complexity is shown to be never worse than the minimax rate, while it can be significantly smaller in some MDPs. \cite{Taupin2022BestPI} later proposed GSS, a PAC RL algorithm for discounted linear MDPs \citep{Jin2019ProvablyER} along with GSS-E, its counterpart for episodic linear MDPs. The problem of exact identification of the optimal policy ($\epsilon=0$) and the more complex identification of a Blackwell-optimal policy were treated by \cite{marjani2021adaptive} and \cite{boone23a}, respectively. All these works assume that a generative model is available, i.e., that the learner can query a transition from any state at any time. In the more challenging setting where interaction is allowed only through trajectories, \cite{al2021navigating} studied exact best-policy identification in discounted MDPs, while a more recent line of works has considered approximate identification ($\varepsilon \geq 0$) in episodic MDPs  \citep{wagenmaker21IDPAC,tirinzoni2022NearIP,Wagenmaker22linearMDP,tirinzoni23optimistic,almarjani23covgame}. All the proposed algorithms have sample complexity upper bounds of the form $\widetilde{\cO}\big(\cC(\cM, \epsilon) \log(1/\delta)\big)$, where $\cC$ quantifies the hardness of learning an $\epsilon$-optimal policy in the MDP $\cM$, $\delta$ is the risk, while $\widetilde{\cO}$ hides numerical constants and logarithmic factors of the relevant parameters. The expression of $\cC$ is different for each algorithm but always dependent on some sub-optimality gaps (either values gaps or policy gaps, see Section \ref{sec:prelim} for a formal definition) and on state-visitation probabilities. We review these bounds in Section \ref{sec:upper_bounds}.

However, the lack of a general instance-dependent lower bound makes it difficult to assess the optimality of these approaches, i.e., how tight a complexity $\cC(\cM, \epsilon)$ is compared to the best possible rate. Indeed, the only instance-dependent lower bounds for PAC RL without a generative model are either restricted to MDPs with deterministic transitions \citep{tirinzoni2022NearIP} or cover only the case of exact best-policy identification ($\epsilon=0$) under the assumption that the optimal policy is unique \citep{al2021navigating}. In this work, we fill this gap by answering the following question:
% \begin{center}
%     \emph{Given an episodic tabular MDP $\cM$, what is the best possible rate in $\log(1/\delta)$ that a PAC RL algorithm can achieve ?}
% \end{center}
\begin{center}
    \emph{What is the best rate in $\log(1/\delta)$ that a PAC RL algorithm can achieve on an episodic tabular MDP?}
\end{center}

\paragraph{Contributions} We derive the first instance-dependent lower bound for PAC RL that holds for any $\epsilon \geq 0$ and any tabular MDP (Theorem \ref{thm:general-lb-epsilon}). As for bandit identification problems with many correct answers \citep{DegenneK19}, our lower bound holds when $\delta \to 0$. Beyond the asymptotic regime, we strengthen this result with an additional lower bound that holds for all $\delta >0$ in the special case of $\epsilon =0$ under the assumption that optimal policies share a unique state-action distribution (Theorem \ref{thm:lb-exact-identification}). Then, in Section \ref{sec:upper_bounds}, we review the complexity measures featured in existing upper bounds and show that the PEDEL algorithm of \cite{Wagenmaker22linearMDP} matches our lower bound in tabular MDPs up to multiplicative $H$ factors and an additive $\widetilde{\cO}({1}/{\epsilon^2})$ term (Proposition \ref{prop:PEDEL-lb}). 
A shortcoming of PEDEL is that it is not computationally efficient as it explicitly enumerates all policies. We thus formulate an open question as to whether our bound can be attained by a computationally-efficient algorithm.  

\section{Preliminaries}\label{sec:prelim}

We consider tabular finite-horizon \emph{Markov decision processes} (MDPs). Formally, an MDP is a tuple $\cM := (\cS, \cA, H, \{p_h\}_{h\in[H]}, \{\nu_h\}_{h\in[H]}, s_1)$, where $\cS$ is a finite set of $S$ states, $\cA$ is a finite set of $A$ actions, $H$ is the horizon, $p_h : \cS\times\cA \rightarrow \cP(\cS)$ and $\nu_h : \cS\times\cA \rightarrow \cP(\bR)$\footnote{$\cP(\cX)$ denotes the set of probability measures over a set $\cX$.} respectively denote the transition kernel and the reward distribution at stage $h\in[H]$, while $s_1\in\cS$ is the initial state\footnote{This setting encompasses any initial state distribution by adding a transition from $s_1$ with the desired probabilities.}. A learner interacts with $\cM$ through episodes of length $H$. At the beginning of each episode, the learner starts in the initial state $s_1$. Then, for each stage $h\in[H]$, the learner plays an action $a_h\in\cA$ and observes a stochastic transition to a new state $s_{h+1} \sim p_h(s_h,a_h)$ as well as a reward $R_h \sim \nu_h(s_h,a_h)$. The actions are usually chosen according to a Markovian (possibly stochastic) policy $\pi = \{\pi_h\}_{h\in[H]}$, i.e., a sequence of mappings $\pi_h : \cS \rightarrow \cP(\cA)$, where $\pi_h(a|s)$ denotes the probability that the learner takes action $a$ in state $s$ at stage $h$. We denote by $\PiS$ (resp. $\PiD$) the set of all Markovian stochastic (resp. deterministic) policies.

\subsection{Policy gaps, value gaps, and state-action distributions} 

Denoting by $\bP^\pi$ (resp. $\bE^\pi$) the probability (resp. expectation) operator induced by the execution of a policy $\pi\in\PiS$ for an episode on $\cM$, we let $V_1^\pi := \bE^\pi\big[\sum_{h=1}^H R_h \big| s_1 \big]$ be the value of $\pi$ at the initial state\footnote{Since the initial state $s_1$ is fixed, we drop it from the notation of value functions.}. The \emph{policy gap} of $\pi$ is then defined as 
\[\Delta(\pi) := V_1^\star - V_1^\pi,\]
%\[\Delta(\pi) := V_1^\star - V_1^\pi, \ \ \text{where} \ \  V_1^\star := \max_{\pi\in\PiD} V_1^\pi \] 
where $V_1^\star := \max_{\pi\in\PiD} V_1^\pi$ is the optimal value at $s_1$. 
We use $Q_h^\pi(s,a) := \bE^\pi\big[\sum_{\ell=h}^H R_\ell \big| s_h = s, a_h = a \big]$ and $Q_h^\star(s,a) := \max_{\pi\in\PiD} Q_h^\pi(s,a)$ to denote the action-value function of $\pi$ and the optimal value function, respectively.
The \emph{value gap} of the triplet $(h,s,a)$ is then defined as \[\Delta_h(s,a) := \max_{b\in \cA} Q_h^\star(s,b) - Q_h^\star(s,a)\;.\] Moreover, we denote the visitation probability of $(h,s,a)$ under $\pi$ as $p_h^\pi(s,a) := \bP^\pi(s_h=s,a_h=a)$ and $p_h^\pi(s) := \bP^\pi(s_h=s)$. We let $\Omega := \big\{\big(p_h^\pi(s,a)\big)_{h,s,a} : \pi\in\PiS\}$ denote the set of all valid state-action distributions. It is well known \citep[e.g.,][]{puterman1994markov} that $\Omega$ is a polytope defined by the linear constraints
\begin{align*}
  \forall \rho \in \Omega,\quad &  \rho_h(s,a) \geq 0\ \forall (h,s,a),\\
  & \sum_{a \in \cA} \rho_1(s_1, a) = 1,\ \sum_{a \in \cA} \rho_1(s, a) = 0\ \forall s\neq s_1, \\
  & \sum_{a \in \cA} \rho_h(s, a) = \sum_{s',a'} \rho_{h-1}(s', a') p_{h-1}(s|s',a')\ \forall (s,h)\in \cS \times[|2,H|].
\end{align*}

\subsection{Learning problem}

 The learner interacts with an MDP $\cM$ with unknown transition probabilities and reward distributions. 
 %in order to return a policy that is provably near-optimal with high probability. More precisely, 
 Given a risk parameter $\delta \in (0,1)$ and a precision $\epsilon \geq 0$ as input, the goal is to return a policy $\widehat{\pi} \in \PiD$ with the guarantee that $\bP_\cM(\Delta(\widehat{\pi})\leq\epsilon) \geq 1-\delta$. To satisfy this requirement, the learner needs to gather samples from the transition and reward distributions of $\cM$ by playing episodes in a sequential fashion. In each episode $t\in\bN^*$, the learner selects a policy $\pi^t$ (based on past observations) and collects a new trajectory $\cH_t := \{(s_h^t,a_h^t, R_h^t)\}_{h\in[H]}$ under this policy, where $a_h^{t} \sim \pi^{t}(s_h^{t})$. We let $\cF_t := \sigma((\cH_u)_{1\leq u \leq t})$ denote the sigma-algebra generated by trajectories up to episode $t$. The learner's performance is then evaluated by its \emph{sample complexity} $\tau$, which is a stopping time w.r.t, the filtration $(\cF_t)_{t\geq 1}$ counting the (random) number of exploration episodes before termination.
\begin{definition}[$(\epsilon,\delta)$-PAC algorithm]\label{def:PAC-alg}
    %Fix  $\epsilon >0$ and $\delta \in (0,1)$. 
    Let $\mathfrak{M}$ be a set of MDPs. An algorithm is $(\epsilon,\delta)$-PAC on $\mathfrak{M}$ %(for Provably Approximately Correct)
     if for all MDPs $\cM \in \mathfrak{M}$, with probability at least $1-\delta$, it stops after playing $\tau < \infty$ episodes on $\cM$ and returns a deterministic policy $\widehat{\pi} \in \PiD$ satisfying 
    $%\label{eq:epsilon-delta-PAC}
      \Delta(\widehat{\pi})\leq\epsilon.
    $
\end{definition}

\section{Lower Bounds}

We consider the class $\mathfrak{M}_1$ of stochastic MDPs with \emph{Gaussian rewards} of unit variance\footnote{We trivially get results for Gaussian rewards with arbitrary variance $\sigma^2$ by multiplying our lower bounds by $\sigma^2$.}, in which $\nu_h(s,a) = \cN(r_h(s,a),1)$. While existing \emph{upper} bounds commonly work under the stronger assumption that rewards lie in $[0,1]$ almost surely, we focus on this alternative setting since it has enabled the derivation of \emph{closed-form lower bounds} that scale with intuitive quantities such as policy gaps \citep{dann21ReturnGap,tirinzoni2022NearIP}. Moreover, the complexity of an MDP is mostly characterized by the expected rewards $r_h(s,a)$ rather than the full distributions $\nu_h(s,a)$, so that matching a lower bound for Gaussians while observing bounded rewards with the same mean is still very informative about the algorithm's adaptivity to the underlying problem.
\subsection{General lower bound for approximate identification}

Our first result is a general bound that holds for any $\epsilon \geq 0$ in the asymptotic regime $\delta \to 0$. We use the notation $\Pi^\epsilon := \{\pi \in \Pi^D : V_1^\pi(s_1) \geq V_1^\star(s_1) - \epsilon\}$ for the set of all deterministic $\epsilon$-optimal policies. 

\begin{theorem}\label{thm:general-lb-epsilon}
Any PAC RL algorithm that is $(\epsilon,\delta)$-PAC on $\mathfrak{M}_1$ satisfies, for any $\cM \in \mathfrak{M}_1$,
\[
      \liminf_{\delta\to 0} \frac{\bE_\cM[\tau]}{\log(1/\delta)} \geq \cC_{\mathrm{LB}}(\cM,\epsilon)\]
      where 
      \[      \cC_{\mathrm{LB}}(\cM,\epsilon):= 2 \min_{\pi^{\epsilon} \in \Pi^{\epsilon}} \min_{\rho\in \Omega} \max_{\pi \in \Pi^D}  \sum_{s,a,h} \frac{\big(p_h^\pi(s,a) - p_h^{\pi^\epsilon}(s,a)\big)^2}{\rho_h(s,a)(\Delta(\pi) -\Delta(\pi^{\epsilon})+\epsilon)^2}.\]

\end{theorem}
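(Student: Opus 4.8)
The plan is to use the change-of-measure (transportation) method in the asymptotic regime $\delta\to 0$, following the analysis for pure exploration with several correct answers of \cite{DegenneK19}, adapted to the RL setting where the effective ``sampling allocation'' is not a free probability vector but is constrained to lie in the occupation-measure polytope $\Omega$. Writing $N_h^\tau(s,a)$ for the number of visits to $(s,a)$ at stage $h$ before stopping, the first step is the allocation identity $\big(\bE_\cM[N_h^\tau(s,a)]\big)_{s,a,h}=\bE_\cM[\tau]\,\bar\rho$ for some $\bar\rho\in\Omega$: conditionally on $\cF_{t-1}$ the policy $\pi^t$ is fixed, so $\bE_\cM[N_h^\tau(s,a)]=\bE_\cM\big[\sum_{t=1}^\tau p_h^{\pi^t}(s,a)\big]$; since $\Omega$ is convex, $\tfrac1\tau\sum_{t\le\tau}p_h^{\pi^t}\in\Omega$ almost surely, and a change of measure with density proportional to $\tau$ brings the expectation back into $\Omega$.

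Next I would fix $\cM\in\mathfrak{M}_1$ and a deterministic policy $\pi^\epsilon\in\Pi^\epsilon$ that the algorithm returns with probability bounded away from $0$ under $\cM$ (such a policy exists by a pigeonhole over $\PiD$, after extracting a subsequence $\delta_n\to 0$). For \emph{any} $\cM'\in\mathfrak{M}_1$ sharing the transition kernels of $\cM$ and in which $\pi^\epsilon$ is not $\epsilon$-optimal, the PAC property gives $\bP_{\cM'}(\widehat\pi=\pi^\epsilon)\le\delta$. Applying the data-processing inequality to the event $\{\widehat\pi=\pi^\epsilon\}$, together with Wald's identity for the log-likelihood ratio and $\mathrm{KL}(\cN(r,1)\,\|\,\cN(r',1))=(r-r')^2/2$, yields
\[
  \tfrac12\,\bE_\cM[\tau]\,\inf_{\cM'}\ \sum_{s,a,h}\bar\rho_h(s,a)\big(r_h(s,a)-r'_h(s,a)\big)^2\ \ge\ \mathrm{kl}\!\big(\bP_\cM(\widehat\pi=\pi^\epsilon),\delta\big),
\]
the infimum being over the alternatives just described. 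Dividing by $\log(1/\delta)$, letting $\delta\to 0$, and taking the minimum over eligible $\pi^\epsilon$ — this last step is where one must invoke the analysis for identification with several correct answers of \cite{DegenneK19}, since a naive pigeonhole would only give the bound up to a factor depending on $|\PiD|$ — gives
\[
  \liminf_{\delta\to 0}\frac{\bE_\cM[\tau]}{\log(1/\delta)}\ \ge\ \min_{\pi^\epsilon\in\Pi^\epsilon}\Big(\ \sup_{\rho\in\Omega}\ \inf_{\cM'}\ \tfrac12\sum_{s,a,h}\rho_h(s,a)\big(r_h(s,a)-r'_h(s,a)\big)^2\ \Big)^{-1}.
\]

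It then remains to evaluate the bracketed quantity in closed form. Because $\cM'$ shares its transitions with $\cM$, the visitation distributions $p_h^\pi$ are unaffected and $V_1^\pi=\sum_{s,a,h}p_h^\pi(s,a)r_h(s,a)$ is linear in the mean-reward vector; hence $\pi^\epsilon$ fails to be $\epsilon$-optimal in $\cM'$ as soon as some ``confusing'' $\pi\in\PiD$ satisfies $\sum_{s,a,h}\big(p_h^\pi(s,a)-p_h^{\pi^\epsilon}(s,a)\big)r'_h(s,a)>\epsilon$, a single linear constraint on $r'$. Writing $r'_h(s,a)=r_h(s,a)+\xi_h(s,a)$ and using $\sum_{s,a,h}\big(p_h^\pi-p_h^{\pi^\epsilon}\big)r_h=\Delta(\pi^\epsilon)-\Delta(\pi)$, the constraint reads $\sum_{s,a,h}\big(p_h^\pi-p_h^{\pi^\epsilon}\big)\xi_h\ge\epsilon+\Delta(\pi)-\Delta(\pi^\epsilon)=:b_\pi$, and minimizing $\tfrac12\sum_{s,a,h}\rho_h\xi_h^2$ subject to it is solved exactly by Cauchy--Schwarz,
\[
  \min_{\xi}\ \tfrac12\sum_{s,a,h}\rho_h(s,a)\,\xi_h(s,a)^2\ =\ \frac{b_\pi^2}{2\,\sum_{s,a,h}\big(p_h^\pi(s,a)-p_h^{\pi^\epsilon}(s,a)\big)^2/\rho_h(s,a)}.
\]
Minimizing over the confusing policy $\pi$, inverting, and using $\big(\sup_\rho\min_\pi(\cdot)\big)^{-1}=\inf_\rho\max_\pi(\cdot)^{-1}$ turns the previous display into exactly $\cC_{\mathrm{LB}}(\cM,\epsilon)$, with the convention $x/0=+\infty$ for $x>0$ and $0/0=0$, which forces the minimizing $\rho$ to cover the supports of all relevant policies.

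I expect the main obstacle to be the passage from the per-alternative transportation inequalities to the clean asymptotic statement carrying the outer $\min_{\pi^\epsilon}$ without spurious multiplicative constants depending on the number of candidate answers: this requires the many-answers argument of \cite{DegenneK19} rather than a one-shot change of measure, together with the measure-theoretic bookkeeping (Wald / change of measure at the random stopping time) that ties $\bE_\cM[N_h^\tau(s,a)]$ to $\bE_\cM[\tau]$ times a point of the compact polytope $\Omega$. By contrast, once alternatives are restricted to reward perturbations with fixed transitions, evaluating the characteristic time is a routine convex-duality computation.
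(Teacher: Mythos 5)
Your skeleton is the right one — restricting alternatives to reward perturbations with the transitions of $\cM$ fixed, reducing the inner optimization to a single linear constraint per confusing policy $\pi$, and evaluating it by Cauchy--Schwarz to get $b_\pi^2\big/\big(2\sum_{h,s,a}(p_h^\pi-p_h^{\pi^\epsilon})^2/\rho_h\big)$ with $b_\pi=\Delta(\pi)-\Delta(\pi^\epsilon)+\epsilon$ is exactly how the paper computes the characteristic time (its Lemma \ref{lem:charateristic-time-simplified}), and your observation that $\bE_\cM[n_h^\tau(s,a)]/\bE_\cM[\tau]\in\Omega$ is correct and is what produces the navigation constraint. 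The gap is that the step you flag as the "main obstacle" is in fact the entire content of the theorem, and you leave it at "invoke \cite{DegenneK19}". As you yourself note, the one-shot transportation inequality plus pigeonhole only yields $\mathrm{kl}\big(\bP_\cM(\widehat\pi=\pi^\epsilon),\delta\big)\sim \bP_\cM(\widehat\pi=\pi^\epsilon)\log(1/\delta)$, i.e., a bound degraded by the (possibly $1/|\Pi^\epsilon|$-small) probability of returning the particular answer $\pi^\epsilon$; no choice of $\pi^\epsilon$ repairs this. The paper's route is structurally different: it sets $T:=(1-\xi)\min_{\pi^\epsilon}T(\cM,\pi^\epsilon,\epsilon)\log(1/\delta)$, writes $\bE_\cM[\tau]\geq T\,(1-\bP_\cM(\tau<T))$, and kills $\bP_\cM(\tau<T)$ by a union bound over returned policies combined with a change of measure from $\bP_\cM$ to a \emph{mixture} $\sum_i\lambda_i^\star\bP_{\widetilde\cM_i}$ of $SAH+1$ alternatives obtained by Carath\'eodory at the saddle point of $\sup_{\rho\in\Omega}\inf_{\widetilde d}\rho^\top\widetilde d$; each $\bP_{\widetilde\cM_i}(\widehat\pi=\pi^\epsilon)\leq\delta$ then suffices because the exponential change-of-measure cost is only $e^{T/T(\cM,\pi^\epsilon,\epsilon)}=\delta^{-(1-\xi)}$.

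Moreover, importing \cite{DegenneK19} is not plug-and-play here, and the adaptation it requires is absent from your sketch. That argument needs a high-probability upper bound on the log-likelihood ratio $L_T(\bP_\cM,\bQ)$ of the form $T/T(\cM,\pi^\epsilon,\epsilon)+(\text{martingale deviation})$. In the bandit case one centers the likelihood ratio at $\sum_a N_a(T)\,\mathrm{KL}_a$ and uses that $N(T)/T$ lies in the simplex; in an MDP the realized visitation frequencies $n_h^T(s,a)/T$ do \emph{not} lie in $\Omega$ (they satisfy the empirical, not the true, flow constraints), so the saddle-point inequality $\sum_i\lambda_i^\star\rho^\top d_i\leq T(\cM,\pi^\epsilon,\epsilon)^{-1}$ for $\rho\in\Omega$ cannot be applied to them. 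The paper resolves this by decomposing $L_T=M_T+\sum_{h,s,a}\bE_\cM[n_h^T(s,a)]\,(r_h-\widetilde r_h)^2/2$, i.e., centering at the \emph{expected} visit counts (whose normalization does lie in $\Omega$) and proving that the resulting $M_T$ has conditionally subgaussian increments (its Lemma \ref{lem:likelihood-ratio}) — the paper explicitly identifies this modified martingale construction as the key departure from \cite{DegenneK19}. Without this piece, your argument either loses the $|\Pi^\epsilon|$-dependent factor or cannot be closed; everything else in the proposal (the allocation identity, the alternative set, and the closed-form characteristic time) is correct and matches the paper.
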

Theorem \ref{thm:general-lb-epsilon} states that no matter how adaptive a PAC RL algorithm is, there is a minimal cost in terms of episodes that it must pay in order to learn an $\epsilon$-optimal policy of $\cM$. This cost is instance-dependent since it is a functional of $\cM$, the MDP to be learned. The proof of Theorem \ref{thm:general-lb-epsilon} is  deferred to Appendix \ref{sec:app_lower_bound}. It follows similar steps as the proof of the lower bound for $\epsilon$-best arm identification (and other pure exploration problems) of \cite{DegenneK19}.

\subsection{Finite-$\delta$ bound for exact identification}

In the case of exact identification (i.e. $\epsilon=0$), we further derive a lower bound which is valid for any $\delta \in (0,1)$ under the assumption that the optimal state-action distribution is unique. In particular, we assume that there exists $p^\star \in \Omega$ s.t. for any optimal policy $\pi^\star$ (i.e., with $V_1^{\pi^\star} = V_1^\star$) we have $p^{\pi^\star} = p^\star$. Note that this is a generalization of the assumption of ``unique optimal trajectory'' from \cite{tirinzoni2022NearIP}, under which we know that exact identification to be possible with a sample complexity that does not scale with $\epsilon$. It is also the same assumption considered in \cite{Tirinzoni2021AFP}. As shown in that paper, it implies that there is a unique optimal action in states visited with positive probability by some optimal policy, but there can be arbitrary many optimal actions in all other states.\footnote{Without a unique optimal state-action distribution, exact identification may not be even possible, as no algorithm may be able to stop in finite time and return an optimal policy w.h.p. while being $(0,\delta)$-PAC on the whole family $\mathfrak{M}_1$.}

\begin{theorem}\label{thm:lb-exact-identification}
   Fix any MDP $\cM \in \mathfrak{M}_1$ s.t. the optimal state-action distribution $p^\star$ is unique. Then, for any PAC RL algorithm that is $(0,\delta)$-PAC on $ \mathfrak{M}_1$,
\begin{align*}
      \bE_\cM[\tau] \geq 2\min_{\rho \in \Omega} \max_{\pi\in \PiD : \Delta(\pi) > 0} \sum_{s,a,h} \frac{(p_h^\pi(s,a) - p_h^\star(s,a))^2}{\rho_h(s,a)\Delta(\pi)^2}\log\left(\frac{1}{2.4\delta}\right) .
   \end{align*}
\end{theorem}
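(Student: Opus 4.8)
The plan is to use the standard change-of-measure / transportation lemma for sequential hypothesis testing, but carefully instantiating it so that the resulting bound holds for \emph{every} $\delta \in (0,1)$ rather than only asymptotically. Fix the instance $\cM \in \mathfrak{M}_1$ with unique optimal state-action distribution $p^\star$, and let $\pi^\star$ be an optimal policy. Suppose we have an $(0,\delta)$-PAC algorithm; let $\tau$ be its stopping time and $\widehat\pi$ its output. The key observation is that for any alternative MDP $\cM'$ in which $\pi^\star$ is \emph{not} optimal (indeed not even $0$-optimal), the PAC guarantee on $\cM'$ forces $\bP_{\cM'}(\widehat\pi = \pi^\star) \leq \delta$, while on $\cM$ the PAC guarantee forces $\bP_{\cM}(\widehat\pi = \pi^\star) \geq 1-\delta$ (since under the unique-optimal-distribution assumption an optimal policy must realize $p^\star$, and returning anything with $\Delta(\widehat\pi)>0$ is a failure — one must check that the event $\{\widehat\pi = \pi^\star\}$, or more precisely $\{p^{\widehat\pi}=p^\star\}$, carries mass $\geq 1-\delta$). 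By the Bretagnolle--Huber inequality (or the data-processing inequality for KL applied to the Bernoulli test statistic $\mathds{1}\{\widehat\pi=\pi^\star\}$), we get
\[
  \mathrm{KL}\big(\bP^{\cF_\tau}_{\cM}, \bP^{\cF_\tau}_{\cM'}\big) \;\geq\; \log\!\Big(\frac{1}{2.4\,\delta}\Big).
\]

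Next I would compute the left-hand side via the chain rule for KL divergence of the trajectory distributions. Since the two MDPs in $\mathfrak{M}_1$ differ only in their Gaussian reward means (I will choose $\cM'$ to share the transitions of $\cM$, so that state-visitation probabilities coincide), Wald-type identities give
\[
  \mathrm{KL}\big(\bP^{\cF_\tau}_{\cM}, \bP^{\cF_\tau}_{\cM'}\big)
  \;=\; \sum_{s,a,h} \bE_{\cM}[N_h^\tau(s,a)]\,\frac{(r_h(s,a)-r_h'(s,a))^2}{2},
\]
where $N_h^\tau(s,a)$ is the number of visits to $(h,s,a)$ before stopping. Writing $\bE_\cM[N_h^\tau(s,a)] = \bE_\cM[\tau]\,\bar\rho_h(s,a)$ for the average state-action distribution $\bar\rho \in \Omega$ induced by the algorithm, the combination of the two displays yields, for every valid alternative $\cM'$,
\[
  \bE_\cM[\tau] \;\geq\; \frac{2\log(1/(2.4\delta))}{\sum_{s,a,h}\bar\rho_h(s,a)\,(r_h(s,a)-r_h'(s,a))^2}.
\]
The right tool to turn this into the stated bound is to maximize over a family of one-dimensional reward perturbations: for each suboptimal deterministic policy $\pi$ (with $\Delta(\pi)>0$), construct $\cM'=\cM'(\pi)$ by shifting the rewards along the direction $p_h^\pi(s,a)-p_h^\star(s,a)$ by just enough to make $V_1^\pi > V_1^\star$, i.e.\ to make $\pi$ strictly better than $\pi^\star$. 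Because $V_1^{\pi'} - V_1^{\star\prime}$ for the perturbed means equals $-\Delta(\pi) + \langle r'-r, p^\pi - p^\star\rangle$, the minimal-norm perturbation that flips the sign has squared cost exactly $\Delta(\pi)^2/\sum_{s,a,h}(p_h^\pi(s,a)-p_h^\star(s,a))^2/\bar\rho_h(s,a)$ after the Cauchy--Schwarz optimization over where to place the perturbation mass (weighting inversely by $\bar\rho$). Plugging this optimal alternative into the inequality above and then taking the best $\pi$ gives
\[
  \bE_\cM[\tau] \;\geq\; 2\log\!\Big(\tfrac{1}{2.4\delta}\Big)\,\max_{\pi : \Delta(\pi)>0}\sum_{s,a,h}\frac{(p_h^\pi(s,a)-p_h^\star(s,a))^2}{\bar\rho_h(s,a)\,\Delta(\pi)^2},
\]
and finally lower-bounding $\bar\rho$ by the minimizing $\rho\in\Omega$ delivers the claim.

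The step I expect to be the main obstacle is making the "flip" construction fully rigorous while staying inside $\mathfrak{M}_1$ and respecting the PAC requirement: one must verify that a perturbation of the reward means alone (keeping transitions fixed, so $p^\star$ and all $p^\pi$ are unchanged) can genuinely make a prescribed suboptimal $\pi$ strictly optimal in $\cM'$ — this is where the linearity $V_1^{\pi'} = \langle r', p^\pi\rangle$ is used — and that simultaneously $\pi^\star$ becomes strictly suboptimal in $\cM'$, so that $\{p^{\widehat\pi}=p^\star\}$ is a failure event there; the uniqueness of $p^\star$ is exactly what guarantees that any PAC-correct output on $\cM$ must put mass $\geq 1-\delta$ on that single event, which is what licenses the two-point testing argument. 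A secondary technical point is the use of a Wald-type / optional-stopping identity to equate $\bE_\cM[N_h^\tau(s,a)]$ with $\bE_\cM[\tau]\bar\rho_h(s,a)$ for some $\bar\rho\in\Omega$; this requires the stopping time to have finite expectation (which the PAC property only gives with probability $1-\delta$, so some care — e.g.\ restricting to $\bE_\cM[\tau]<\infty$, else the bound is vacuous — is needed), and the interchange of expectation and the per-step KL contributions. None of these are deep, but they are the places where a careless argument would break, and they are also where the precise constant $2.4$ (coming from Bretagnolle--Huber) enters.
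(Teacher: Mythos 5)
Your proposal is correct and follows essentially the same route as the paper's proof: a change of measure to the closest reward-perturbed alternative (same transitions) in which a given suboptimal $\pi$ beats $p^\star$, the quadratic program giving $\Delta(\pi)^2 / \bigl(2\sum_{s,a,h}(p_h^\pi(s,a)-p_h^\star(s,a))^2/\bE[n_h^\tau(s,a)]\bigr)$ as the minimal KL cost, the transportation lemma of \cite{kaufmann2016complexity} supplying the $\log(1/(2.4\delta))$ factor, and the identification of $\bE[n_h^\tau(s,a)]/\bE[\tau]$ as an element of $\Omega$ before maximizing over $\pi$ and minimizing over $\rho$. The technical points you flag (that uniqueness of $p^\star$ makes $\{p^{\widehat\pi}=p^\star\}$ a probability-$\geq 1-\delta$ event under $\cM$ and a failure event under the alternative) are exactly the ones the paper relies on.
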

\begin{remark}
   When $S =H=1$, this bound exactly coincides with the lower bound for best-arm identification in Gaussian multi-armed bandits \citep{garivier2016optimal}.
\end{remark}
\begin{proof}
   The idea of the proof is to explicitly compute the smallest KL divergence between the distribution of the observations under the MDP $\cM$ and under any alternative $\widetilde{\cM}$ that has the same transitions but a different mean reward function $\widetilde{r}_h$. Within the class $\mathfrak{M}_1$, the KL divergence of observations between $\cM$ and $\widetilde{\cM}$ takes the simple form
   \begin{align*}
      \textrm{KL}(\bP_\cM,\bP_{\widetilde{\cM}}) = \sum_{h,s,a} \bE_\cM[n^\tau_h(s,a)] \frac{\big(r_h(s,a) - \widetilde{r}_h(s,a)\big)^2}{2}.
  \end{align*}
   Note that, since $p^\star$ is unique, any $(0,\delta)$-PAC algorithm satisfies $\bP_{\cM}(V_1^{\widehat\pi} = V_1^\star) = \bP_{\cM}(p^{\widehat{\pi}} = p^\star) \geq 1-\delta$. Now fix a sub-optimal policy $\pi$ for $\cM$ (i.e., with $\Delta(\pi) > 0$). Note that $V_1^\pi = r^T p^\pi < V_1^\star = r^T p^\star$. We look for the closest alternative $\widetilde\cM$ such that $\widetilde{r}^T p^\pi > \widetilde{r}^T p^\star$, i.e., where $\pi$ becomes better than any optimal policy of $\cM$. This can be computed by the quadratic program
   \begin{align*}
      \min_{\widetilde{r} : \widetilde{r}^T p^\pi > \widetilde{r}^T p^\star} \sum_{s,a,h} \bE[n_h^\tau(s,a)] \frac{(r_h(s,a) - \widetilde r_h(s,a))^2}{2} = \frac{\Delta(\pi)^2}{2\sum_{s,a,h} \frac{(p_h^\pi(s,a) - p_h^\star(s,a))^2}{\bE[n_h^\tau(s,a)]}}.
   \end{align*}
   By the $(0,\delta)$-PAC property, in such closest alternative we have $\bP_{\widetilde\cM}(p^{\widehat{\pi}} = p^\star) \leq \delta$. Then, Lemma 1 of \cite{kaufmann2016complexity} ensures that $\textrm{KL}(\bP_\cM,\bP_{\widetilde{\cM}}) \geq \log\left(\frac{1}{2.4\delta}\right)$. Thus, for any $\pi$ with $\Delta(\pi) > 0$,
   % \begin{align*}
   %    \frac{\Delta(\pi)^2}{2\sum_{s,a,h} \frac{(p_h^\pi(s,a) - p_h^\star(s,a))^2}{\bE[n_h^\tau(s,a)]}} \geq \log\left(\frac{1}{2.4\delta}\right).
   % \end{align*}
   \begin{align*}
      1 \geq 2\sum_{s,a,h} \frac{(p_h^\pi(s,a) - p_h^\star(s,a))^2}{\bE[n_h^\tau(s,a)]\Delta(\pi)^2} \log\left(\frac{1}{2.4\delta}\right).
   \end{align*}
  Multiplying both sides by $\bE[\tau]$ and maximizing over sub-optimal policies, we obtain
  \begin{align*}
   \bE[\tau] \geq 2\max_{\pi\in \PiD : \Delta(\pi) > 0}\sum_{s,a,h} \frac{\bE[\tau]}{\bE[n_h^\tau(s,a)]}\frac{(p_h^\pi(s,a) - p_h^\star(s,a))^2}{\Delta(\pi)^2} \log\left(\frac{1}{2.4\delta}\right).
\end{align*}
Now it is easy to see that $\rho_h(s,a) := \bE[n_h^\tau(s,a)]/\bE[\tau]$ is a valid state-action distribution (i.e., $\rho \in \Omega$). Thus, minimizing the right-hand side over all $\rho \in \Omega$ concludes the proof.
\end{proof}

\subsection{Interpreting the lower bound} 

While the expression of the lower bound might seem mysterious at a first glance, we provide an interpretation in terms of confidence intervals for the simpler setting of known transitions and unknown rewards. Our explanation hinges on the following concentration inequality, proved in Appendix~\ref{app:concentration}. 
\begin{lemma}\label{lem:global-concentration-comparison}
   Assume the reward distribution $\nu_h(s,a)$ to be 1-subgaussian\footnote{A random variable $X$ is $\sigma^2$-subgaussian if $\bE[e^{\lambda (X - \bE[ X])}] \leq e^{\sigma^2\lambda^2 / 2}$ for any $\lambda \in \bR$.} with mean $r_h(s,a)$ for all $(h,s,a)$. For any policy $\pi \in \PiD$, define the estimator $\widehat{V}_1^{\pi,t} := \sum_{h,s,a} p_h^{\pi}(s,a) \widehat{r}_h^t(s,a)$, where $\widehat{r}_h^t(s,a)$ is the MLE of $r_h(s,a)$ using samples gathered until episode $t$. We have that 
   \begin{align*}
    \bP\bigg( \forall t\geq t_0,\ \forall \pi,\pi' \in \PiD,\   \big| (\widehat{V}_1^{\pi,t}-\widehat{V}_1^{\pi',t}) - (V_1^\pi - V_1^{\pi'}) \big| \leq \sqrt{\beta(t,\delta) \sum_{h,s,a} \frac{\big(p_h^\pi(s,a)-p_h^{\pi'}(s,a)\big)^2}{n_h^t(s,a)}} \bigg) \geq 1-\delta,
   \end{align*}
  with $t_0 := \inf \{t : n_h^{t}(s,a) \geq 1, \forall (h,s,a)\ \mathrm{s.t.} \sup_\pi p_h^\pi(s) > 0\}$,  and $\beta(t,\delta):= 4\log(1/\delta) + 12 SH\log(A(1+t))$.
  \end{lemma}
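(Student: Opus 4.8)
The plan is to reduce the claim to a time-uniform self-normalized concentration inequality for the reward estimation errors, and then apply a union bound over deterministic policy pairs. Since the transitions are known and enter the estimator only through the exact visitation probabilities, for any $\pi,\pi'\in\PiD$ and any $t$ we have the identity
\[
(\widehat{V}_1^{\pi,t}-\widehat{V}_1^{\pi',t}) - (V_1^\pi - V_1^{\pi'}) \;=\; \sum_{h,s,a}\big(p_h^\pi(s,a)-p_h^{\pi'}(s,a)\big)\,\big(\widehat{r}_h^t(s,a)-r_h(s,a)\big).
\]
The structural fact I would exploit is that, because $\pi$ and $\pi'$ are deterministic, for each $(h,s)$ the map $a\mapsto p_h^\pi(s,a)-p_h^{\pi'}(s,a)$ is supported on $\{\pi_h(s),\pi_h'(s)\}$; hence $w := p^\pi - p^{\pi'}$ has support of size at most $2SH$, and that support consists only of triples $(h,s,a)$ with $\sup_{\bar\pi}p_h^{\bar\pi}(s)>0$, so every coordinate in $\mathrm{supp}(w)$ has $n_h^t(s,a)\ge 1$ once $t\ge t_0$ (which is precisely why the statement is restricted to such $t$). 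Cauchy--Schwarz on the support then gives
\[
\Big|\sum_{h,s,a}w_h(s,a)\big(\widehat{r}_h^t(s,a)-r_h(s,a)\big)\Big| \;\le\; \sqrt{\sum_{h,s,a}\frac{w_h(s,a)^2}{n_h^t(s,a)}}\;\cdot\;\sqrt{\sum_{(h,s,a)\in\mathrm{supp}(w)} n_h^t(s,a)\big(\widehat{r}_h^t(s,a)-r_h(s,a)\big)^2},
\]
so it remains to show that, with probability at least $1-\delta$, the second factor is at most $\sqrt{\beta(t,\delta)}$ simultaneously for all $t\ge t_0$ and all $(\pi,\pi')$.

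To control $\sum_{(h,s,a)\in\mathrm{supp}(w)} n_h^t(s,a)(\widehat{r}_h^t(s,a)-r_h(s,a))^2$, I would run a method-of-mixtures argument at the granularity of (episode, stage). The centered reward $\xi_h^u := R_h^u - r_h(s_h^u,a_h^u)$ is, conditionally on the past up to that point, $1$-subgaussian, and the $H$ rewards within one episode are conditionally independent. Writing $b_t(h,s,a) := n_h^t(s,a)(\widehat{r}_h^t(s,a)-r_h(s,a)) = \sum_{u\le t:\,(s_h^u,a_h^u)=(s,a)}\xi_h^u$ and noting $\sum_{u\le t}\sum_h \lambda_{(h,s_h^u,a_h^u)}^2 = \sum_{h,s,a}n_h^t(s,a)\lambda_{(h,s,a)}^2$, the process $\exp\!\big(\langle\lambda,b_t\rangle-\tfrac12\sum_{h,s,a}n_h^t(s,a)\lambda_{(h,s,a)}^2\big)$ is a supermartingale for every fixed $\lambda$. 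Integrating $\lambda$ against a standard Gaussian prior on the (at most $2SH$-dimensional) subspace spanned by $\mathrm{supp}(w)$ and invoking Ville's inequality gives, for a fixed pair and a target failure probability $\delta'$, the bound $\sum_{(h,s,a)\in\mathrm{supp}(w)}n_h^t(s,a)(\widehat{r}_h^t-r_h)^2 \le 4\log(1/\delta') + 4SH\log(1+t)$ uniformly in $t\ge t_0$, using $n_h^t(s,a)\le t$ together with $n_h^t(s,a)\ge 1$ (the latter to pass from the regularized norm $\|b_t\|_{(I+G_t)^{-1}}$ to $\|b_t\|_{G_t^{-1}}$, with $G_t=\mathrm{diag}(n_h^t(s,a))$).

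Finally, I would union-bound over all pairs of deterministic policies. There are at most $|\PiD|^2\le A^{2SH}$ of them, so taking $\delta'=\delta/A^{2SH}$ turns $4\log(1/\delta')$ into $4\log(1/\delta)+8SH\log A$; plugging this into the previous display and loosening constants yields $\beta(t,\delta)=4\log(1/\delta)+12SH\log(A(1+t))$, which is exactly the claimed bound. The main obstacle is the time-uniform self-normalized step: since the counts $n_h^t(s,a)$ are produced by an arbitrary adaptive algorithm, one cannot reason about a fixed sampling schedule, and the mixture/supermartingale construction is what makes the inequality hold for all $t$ at once. Two points require care: the supermartingale must be built at the (episode, stage) level so that the per-episode increment carries variance proxy $\sum_h\lambda_{(h,\cdot,\cdot)}^2$ and no more, and the mixing prior must be restricted to the $\le 2SH$-dimensional support of the current policy pair — using the full $SHA$-dimensional Gram matrix would blow the $SH\log A$ term up to $SHA$, whereas a naive per-coordinate union bound over the $SHA$ triples would attach an unwanted $SH$ factor to $\log(1/\delta)$.
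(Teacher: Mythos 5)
Your proposal is correct and follows essentially the same route as the paper: the same value-difference identity, the same observation that the weight vector of a pair of deterministic policies is supported on at most $2SH$ reachable triples, the same Cauchy--Schwarz/elliptical step, and the same union bound over $A^{2SH}$ policy pairs with $\delta'=\delta/A^{2SH}$. The only difference is that where you re-derive the time-uniform self-normalized bound on $\sum_{(h,s,a)} n_h^t(s,a)(\widehat r_h^t(s,a)-r_h(s,a))^2$ via the method of mixtures (including the $n_h^t\geq 1$ trick to deregularize), the paper simply invokes an off-the-shelf lemma of \cite{almarjani23covgame} whose proof is exactly that argument.
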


Imagine that a learner explores the MDP $\cM$ using a fixed (stochastic) policy $\pi^{\text{exp}}$, whose state-action distribution is $\rho^{\text{exp}}$, and wants to figure out whether some policy $\pi^\epsilon$ is $\epsilon$-optimal or not. Then, after playing $\pi^{\text{exp}}$ for $K\geq 1$ episodes, $\bE[n_h^K(s,a)] = K \rho_h^{\text{exp}}(s,a)$, so that the size of the confidence interval on $V_1^{\pi^\epsilon} - V_1^{\pi}$ should roughly be $\sqrt{\beta(K,\delta) \sum_{h,s,a} \frac{\big(p_h^\pi(s,a) - p_h^{\pi^\epsilon}(s,a)\big)^2}{K \rho_h^{\text{exp}}(s,a)}}$.
Now, if the learner wishes to test whether $\pi^\epsilon$ is $\epsilon$-optimal it has to determine the sign of $V_1^{\pi^\epsilon} - V_1^{\pi}+\epsilon$ for all other policies $\pi$. To do that, it is sufficient to shrink the size of the confidence interval on $V_1^{\pi^\epsilon} - V_1^{\pi}$  below $\frac{1}{2} |V_1^{\pi^\epsilon} - V_1^{\pi}+\epsilon| = \frac{1}{2}|\Delta(\pi) -\Delta(\pi^{\epsilon})+\epsilon|$ for all policies $\pi$. Solving for the minimal $K$ that satisfies this condition, we see that playing roughly
\begin{align*}
   K(\pi^{\text{exp}}, \pi^\epsilon) \propto \log(1/\delta)\max_{\pi \in \Pi^D} \sum_{s,a,h} \frac{\big(p_h^\pi(s,a) - p_h^{\pi^\epsilon}(s,a)\big)^2}{\rho_h^{\text{exp}}(s,a)(\Delta(\pi) -\Delta(\pi^{\epsilon})+\epsilon)^2}
\end{align*} 
episodes using the exploration policy  $\pi^{\text{exp}}$ is enough to determine whether $\pi^\epsilon$ is $\epsilon$-optimal. Since the learner has the liberty to return \textit{any} $\epsilon$-optimal policy using \textit{any} exploration policy, the lower bound corresponds to the minimum of $K(\pi^{\text{exp}}, \pi^\epsilon)$ with respect to these two variables.

\section{Towards a matching upper bound}\label{sec:upper_bounds}

\subsection{Review of existing upper bounds}\label{sec:related_work}  
In this section, we review the main instance-dependent bounds within the PAC RL literature. We restrict our review to works on approximate identification (i.e., the general case with $\epsilon \geq 0$). 
\paragraph{PAC RL with a generative model}
\cite{zanette2019almost} were the first to propose an instance-dependent PAC RL algorithm, called BESPOKE. In infinite-horizon tabular MDPs with a discount factor $\gamma \in [0,1)$ and when the agent has access to a simulator that can query observations from any state-action pair, BESPOKE finds an $\epsilon$-optimal policy with a sample complexity of at most

{\small
\[
   \widetilde{\cO}\bigg(\left[\sum_{s,a} \min\bigg(\frac{1}{(1-\gamma)^3 \epsilon^2},\  \frac{\textrm{Var}[R(s,a)] + \gamma^2 \textrm{Var}_{s'\sim p(.|s,a)}[V^\star(s')]}{\max(\Delta_{sa}, (1-\gamma) \epsilon)^2} + \frac{1}{(1-\gamma)\max(\Delta_{sa}, (1-\gamma) \epsilon)} \bigg)\right]\log\left(\frac{1}{\delta}\right)  \bigg),
\]}
\hspace{-0.2cm} where $\Delta_{sa} = V^\star(s) - Q^\star(s,a)$ is the value gap of state-action pair $(s,a)$ and $\textrm{Var}$ denotes the variance operator. A notable feature of this result is that the sample complexity of BESPOKE (i) scales as $\cO(SA\log(1/\delta)/(1-\gamma)^3\epsilon^2)$ in the worst-case, which is the conjectured minimax lower bound for the infinite-horizon discounted setting \citep{Azar2012MinimaxPB}; (ii) it can be significantly smaller than minimax whenever the MDP is such that playing different actions yields very different total rewards, i.e., when the value gaps $(\Delta_{sa})_{s,a}$ are large compared to $\epsilon$. For the setting of episodic linear MDPs \citep{Jin2019ProvablyER}, the GSS-E algorithm by \cite{Taupin2022BestPI} solves a G-optimal design to determine the sampling frequencies of each state-action pair. The sample complexity of GSS-E is upper bounded by \[\widetilde{\cO}\bigg( \frac{d H^{4}}{(\min_{h,s, a\neq \pi^\star(s)} \Delta_{h}(s,a) +\epsilon)^2} (\log(1/\delta) + d) \bigg),\]
%where $\Delta_{\min}(\cM) = \min_{h,s, a\neq \pi^\star(s)} \Delta_{h}(s,a)$ is the minimum value gap in $\cM$ and 
where $d$ is the feature dimension. Up to $H$ factors, this result improves upon the $\Omega(d^2H^2/\epsilon^2)$ minimax bound for this setting \citep{pmlr-v162-wagenmaker22b} whenever the minimum value gap in $\cM$ is large.

\paragraph{PAC RL without a generative model} On top of the sub-optimality gaps which characterize the bounds above, the instance-dependent complexities feature an additional component when a generative model is not available: visitation probabilities. These constitute the price that PAC RL algorithms have to pay in order to navigate the MDP and collect observations from distant states. Existing high-probability bounds on the sample complexity are of the form\footnote{While we focus on the main complexity terms which scale with sub-optimality gaps, visitation probabilities, and $\log(1/\delta)$, it is worth noting that existing upper bounds all feature lower-order terms in either of these variables.} 
\[\bP\left(\tau = \widetilde{\cO}\left(\cC_{\mathrm{Alg}}(\cM,\varepsilon)\log\left(\frac{1}{\delta}\right)\right)\right) \geq 1-\delta,\]
where $\cC_{\mathrm{Alg}}(\cM,\varepsilon)$ is a complexity measure corresponding to a given algorithm $\mathrm{Alg}$.
For example, for the MOCA algorithm \cite{wagenmaker21IDPAC} obtain
\begin{align*}
  \cC_{\mathrm{MOCA}}(\cM,\varepsilon) = H^2\sum_{h=1}^H \min_{\pi^{\text{exp}}\in\PiS} \max_{s,a} \frac{1}{p_h^{\pi^{\text{exp}}}(s,a)} \min\left[\frac{1}{\Delta_h(s,a)^2},\ \frac{W_h(s)^2}{\epsilon^2}\right] \! + \frac{H^4 \big|\textrm{OPT}(\cM, \epsilon) \big|}{\epsilon^2},
\end{align*}
where $W_h(s) = \sup_{\pi} p_h^\pi(s)$ is the maximum reachability of state $s$ at step $h\in[H]$ and $\textrm{OPT}(\cM, \epsilon)$ is a set of near-optimal triplets $(h,s,a)$. In the above bound, the contribution of a triplet $(h,s,a)$ to the total complexity will be small when either (i) its value gap  $\Delta_h(s,a)$ is large or (ii) it is hard to reach by any policy, that is $W_h(s) \ll \epsilon$. This "local complexity" of $(h,s,a)$ is weighted by $1/p_h^{\pi^{\text{exp}}}(s,a)$, which is the (expected) number of episodes that the agent needs to play in order to reach $(h,s,a)$ when using $\pi^{\text{exp}}$ as an exploration policy. Subsequent works have proposed alternative local complexity measure featuring policy gaps instead of value  gaps \citep{tirinzoni2022NearIP,Wagenmaker22linearMDP, almarjani23covgame}. Policy gaps can be larger than value gaps. Notably, they always are in deterministic MDPs \citep{tirinzoni2022NearIP}. For instance, for the PRINCIPLE algorithm, \cite{almarjani23covgame} obtain
\begin{align*}
   \cC_{\mathrm{PRINCIPLE}}(\cM,\varepsilon) = H^3 \min_{\pi^{\text{exp}}\in\PiS} \max_{h,s,a} \sup_{\pi\in \PiS} \frac{p^\pi_h(s,a)}{p_h^{\pi^{\text{exp}}}(s,a)\max(\epsilon, \Delta(\pi) )^2},
\end{align*}
where we recall the definition of the policy gap $\Delta(\pi) := V_1^\star - V_1^\pi$. Compared to the bound of MOCA, here the contribution of $(h,s,a)$ is small when all policies visiting it are largely sub-optimal. This can be the case even when $a$ is an optimal action in state $s$, provided that no optimal policy reaches $(h,s)$ with positive probability. We note that, while the lower bound of Theorem \ref{thm:general-lb-epsilon} only applies to algorithms that output a deterministic policy (see Definition~\ref{def:PAC-alg}), PRINCIPLE is allowed to return a stochastic policy.
\subsection{PEDEL: A near-optimal algorithm}

The PEDEL algorithm proposed by \cite{Wagenmaker22linearMDP} has the sample complexity bound which resembles the most the complexity measure in our lower bound. To introduce it, we define the minimum policy gap $\Delta_{\min}:= \min_{\pi\in\PiD\setminus\{\pi^\star\}} \Delta(\pi)$, where $\pi^\star$ is an arbitrary optimal policy (i.e., $V_1^{\pi^\star} = V_1^\star$). Note that $\Delta_{\min} = 0$ whenever multiple optimal policies exist. 

While PEDEL tackles the more general setting of identifying a near-optimal policy in linear MDPs, when instantiated for the special case of tabular MDPs, the leading term in its sample complexity bound is
\[\cC_{\mathrm{PEDEL}}(\cM,\varepsilon) = H^4 \sum_{h=1}^H \min_{\rho \in \Omega} \max_{\pi\in\PiD} \sum_{s,a} \frac{p_h^\pi(s,a)^2}{\rho_h(s,a)(\Delta(\pi) \vee \epsilon \vee\Delta_{\min})^2},\]
where we ignore some additive lower-order term that is polynomial in $S,A,H, \log(1/\delta)$ and $\log(1/\epsilon)$. 

The next proposition, proved in Appendix~\ref{app:PEDEL}, compares this complexity measure to our lower bound.

\begin{proposition}\label{prop:PEDEL-lb}
For any MDP $\cM$, it holds that
   \begin{align*}
       \cC_{\mathrm{PEDEL}}(\cM,\epsilon) &\leq 8H^5\cC_{\mathrm{LB}}(\cM,\epsilon) + \frac{4H^6}{(\epsilon\vee\Delta_{\min})^2}.
   \end{align*}

\end{proposition}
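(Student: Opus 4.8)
The plan is to upper bound $\cC_{\mathrm{PEDEL}}$ by feeding PEDEL's inner minimization over $\rho\in\Omega$ a single, carefully chosen distribution for every step $h$, and then to match each summand of the resulting expression against a constant multiple of the corresponding summand of $\cC_{\mathrm{LB}}$; the mismatch between the numerators $p_h^\pi(s,a)^2$ of PEDEL and $(p_h^\pi(s,a)-p_h^{\pi^{\epsilon}}(s,a))^2$ of the lower bound is exactly what forces the additive $\widetilde{\cO}(1/(\epsilon\vee\Delta_{\min})^2)$ term. Concretely, if $\cC_{\mathrm{LB}}(\cM,\epsilon)=+\infty$ there is nothing to prove; otherwise pick $\pi^{\epsilon}\in\Pi^{\epsilon}$ and $\rho^\star\in\Omega$ attaining the two outer minima defining $\cC_{\mathrm{LB}}$ (attained since $\Pi^{\epsilon}$ is finite, $\Omega$ is a compact polytope, and the objective is lower semicontinuous in $\rho$). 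Set $\bar\rho:=\tfrac12\rho^\star+\tfrac12\,p^{\pi^{\epsilon}}$, which lies in $\Omega$ by convexity and satisfies $\bar\rho_h(s,a)\ge\tfrac12\rho^\star_h(s,a)$ and $\bar\rho_h(s,a)\ge\tfrac12 p_h^{\pi^{\epsilon}}(s,a)$ for all $(h,s,a)$. Since the minimum over $\rho$ sits inside each $h$-summand of $\cC_{\mathrm{PEDEL}}$, evaluating at $\bar\rho$ gives $\cC_{\mathrm{PEDEL}}(\cM,\epsilon)\le H^4\sum_h\max_{\pi\in\PiD}\sum_{s,a}\frac{p_h^\pi(s,a)^2}{\bar\rho_h(s,a)(\Delta(\pi)\vee\epsilon\vee\Delta_{\min})^2}$.

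The core step is a per-$(h,\pi)$ comparison. Using $p_h^\pi(s,a)^2\le 2(p_h^\pi(s,a)-p_h^{\pi^{\epsilon}}(s,a))^2+2 p_h^{\pi^{\epsilon}}(s,a)^2$, I split each summand in two. For the first part I use $1/\bar\rho_h(s,a)\le 2/\rho^\star_h(s,a)$ together with $(\Delta(\pi)\vee\epsilon\vee\Delta_{\min})^2\ge\tfrac14(\Delta(\pi)-\Delta(\pi^{\epsilon})+\epsilon)^2$ — which holds because $0\le\Delta(\pi^{\epsilon})\le\epsilon$ forces $\Delta(\pi)-\Delta(\pi^{\epsilon})+\epsilon\le\Delta(\pi)+\epsilon\le2(\Delta(\pi)\vee\epsilon)$ — to bound this part by $16$ times the $(\pi^{\epsilon},\rho^\star)$-summand of $\cC_{\mathrm{LB}}$. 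For the second part I use $p_h^{\pi^{\epsilon}}(s,a)^2/\bar\rho_h(s,a)\le 2p_h^{\pi^{\epsilon}}(s,a)$ and $(\Delta(\pi)\vee\epsilon\vee\Delta_{\min})^2\ge(\epsilon\vee\Delta_{\min})^2$, so that summing over $(s,a)$ gives at most $4/(\epsilon\vee\Delta_{\min})^2$, independently of $\pi$ and $h$. Hence the maximum over $\pi$ of the $h$-summand is at most $16\max_\pi\sum_{s,a}\frac{(p_h^\pi(s,a)-p_h^{\pi^{\epsilon}}(s,a))^2}{\rho^\star_h(s,a)(\Delta(\pi)-\Delta(\pi^{\epsilon})+\epsilon)^2}+\frac{4}{(\epsilon\vee\Delta_{\min})^2}$.

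It then remains to sum over $h$ and exchange $\sum_h$ with $\max_\pi$: since all terms are nonnegative, $\sum_h\max_\pi f_h(\pi)\le H\max_\pi\sum_h f_h(\pi)$ (evaluate the right-hand side at a per-$h$ maximizer). Applied to the first term this yields $H\max_{\pi\in\PiD}\sum_{s,a,h}\frac{(p_h^\pi(s,a)-p_h^{\pi^{\epsilon}}(s,a))^2}{\rho^\star_h(s,a)(\Delta(\pi)-\Delta(\pi^{\epsilon})+\epsilon)^2}=\tfrac{H}{2}\cC_{\mathrm{LB}}(\cM,\epsilon)$ by the choice of $(\pi^{\epsilon},\rho^\star)$, while the additive term contributes $H$ identical copies. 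Collecting factors, $\cC_{\mathrm{PEDEL}}(\cM,\epsilon)\le 16H^4\cdot\tfrac{H}{2}\cC_{\mathrm{LB}}(\cM,\epsilon)+4H^4\cdot\frac{H}{(\epsilon\vee\Delta_{\min})^2}=8H^5\cC_{\mathrm{LB}}(\cM,\epsilon)+\frac{4H^5}{(\epsilon\vee\Delta_{\min})^2}$, which implies the claimed bound since $H\ge1$.

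I expect the main obstacle to be the core comparison step of the second paragraph. One cannot match PEDEL's $p_h^\pi(s,a)^2$ numerators directly against the lower bound's difference numerators $(p_h^\pi-p_h^{\pi^{\epsilon}})^2$ (they differ precisely by the mass of $\pi^{\epsilon}$), and the right remedy — mixing the comparison policy's visitation $p^{\pi^{\epsilon}}$ into the exploration distribution $\bar\rho$ — is simultaneously what makes the comparison go through (via $\bar\rho_h(s,a)\ge\tfrac12 p_h^{\pi^{\epsilon}}(s,a)$) and what produces the unavoidable additive term; the rest is bookkeeping of constants and powers of $H$.
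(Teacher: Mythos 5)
Your proof is correct and follows essentially the same route as the paper's: the same decomposition $p_h^\pi(s,a)^2 \leq 2(p_h^\pi(s,a)-p_h^{\pi^\epsilon}(s,a))^2 + 2p_h^{\pi^\epsilon}(s,a)^2$, the same mixture exploration distribution $\tfrac12(\rho^\star + p^{\pi^\epsilon})$, and the same gap comparison $\Delta(\pi)-\Delta(\pi^\epsilon)+\epsilon \leq 2(\Delta(\pi)\vee\epsilon\vee\Delta_{\min})$; the only cosmetic difference is that you handle the per-stage structure of $\cC_{\mathrm{PEDEL}}$ directly via $\sum_h \max_\pi f_h \leq H\max_\pi\sum_h f_h$ instead of passing through the paper's intermediate complexity $C(\cM,\epsilon)$, which even yields the marginally sharper additive term $4H^5/(\epsilon\vee\Delta_{\min})^2$.
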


This shows that in MDPs in which the minimum policy gap is a constant w.r.t. other problem parameters, i.e., $\Delta_{\min} = \Omega(1)$, the complexity $\cC_{\mathrm{PEDEL}}(\cM,\epsilon)$ is only a factor $H^5$ away from the instance-dependent lower bound. The same conclusion holds when we are interested in the regime $\epsilon = \Omega(1)$. 

More generally, the next proposition provides a sufficient condition on $\cM$ for PEDEL to be instance-optimal up to polynomial multiplicative factors of the horizon, {regardless of the values of $\epsilon$ and $\Delta_{\min}$}. Let us define the following divergence measure between any pair of policies $\pi,\pi'$:
\begin{align*}
  d(\pi,\pi') := \sum_{h\in[H]}\mathrm{TV}(p_h^\pi,p_h^{\pi'})^2,
\end{align*}
where $\mathrm{TV}(p_h^\pi,p_h^{\pi'}) := \frac{1}{2}\sum_{s,a} |p_h^\pi(s,a) - p_h^{\pi'}(s,a)|$ denotes the total variation distance.

\begin{proposition}\label{prop:mdps-pedel-opt}
  Let $\epsilon > 0$ and $\cM$ be an MDP such that, for some constant $c > 0$,
  \begin{align}\label{eq:diversity}
    \min_{\pi^\epsilon\in\Pi^\epsilon} \max_{\pi\in\PiD : \Delta(\pi) \leq \epsilon \vee \Delta_{\min}} d(\pi^\epsilon,\pi) \geq c.
  \end{align}
  Then, $\cC_{\mathrm{PEDEL}}(\cM,\epsilon) \leq 2H^5\big(4 + \frac{H}{c}\big) \cC_{\mathrm{LB}}(\cM,\epsilon)$.
\end{proposition}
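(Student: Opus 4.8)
The plan is to obtain the claim from Proposition~\ref{prop:PEDEL-lb}, which already gives $\cC_{\mathrm{PEDEL}}(\cM,\epsilon)\le 8H^5\cC_{\mathrm{LB}}(\cM,\epsilon)+\tfrac{4H^6}{(\epsilon\vee\Delta_{\min})^2}$. It therefore suffices to show that the diversity condition~\eqref{eq:diversity} forces
\[
  \cC_{\mathrm{LB}}(\cM,\epsilon)\;\ge\;\frac{2c}{(\epsilon\vee\Delta_{\min})^2},
\]
since then $\tfrac{4H^6}{(\epsilon\vee\Delta_{\min})^2}\le\tfrac{2H^6}{c}\cC_{\mathrm{LB}}(\cM,\epsilon)$, and adding the two bounds yields $\cC_{\mathrm{PEDEL}}(\cM,\epsilon)\le\big(8H^5+\tfrac{2H^6}{c}\big)\cC_{\mathrm{LB}}(\cM,\epsilon)=2H^5\big(4+\tfrac{H}{c}\big)\cC_{\mathrm{LB}}(\cM,\epsilon)$, which is exactly the statement.

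To prove the displayed lower bound on $\cC_{\mathrm{LB}}$, I would fix an arbitrary $\pi^\epsilon\in\Pi^\epsilon$ and $\rho\in\Omega$ and bound the inner maximum over $\pi\in\PiD$ from below by restricting it to the policies with $\Delta(\pi)\le\epsilon\vee\Delta_{\min}$ (a nonempty subset, as $\pi^\epsilon$ itself belongs to it). For such $\pi$, since $0\le\Delta(\pi^\epsilon)\le\epsilon$, the denominator obeys $0\le\Delta(\pi)-\Delta(\pi^\epsilon)+\epsilon\le\Delta(\pi)+\epsilon\le2(\epsilon\vee\Delta_{\min})$, so termwise $(\Delta(\pi)-\Delta(\pi^\epsilon)+\epsilon)^2\le4(\epsilon\vee\Delta_{\min})^2$. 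Next, because $\rho\in\Omega$ implies $\sum_{s,a}\rho_h(s,a)=1$ for every $h$, Cauchy--Schwarz gives, stage by stage,
\[
  \sum_{s,a}\frac{\big(p_h^\pi(s,a)-p_h^{\pi^\epsilon}(s,a)\big)^2}{\rho_h(s,a)}\;\ge\;\Big(\sum_{s,a}\big|p_h^\pi(s,a)-p_h^{\pi^\epsilon}(s,a)\big|\Big)^{2}\;=\;4\,\mathrm{TV}(p_h^\pi,p_h^{\pi^\epsilon})^2 ,
\]
and summing over $h\in[H]$ produces $4\,d(\pi,\pi^\epsilon)$. Combining the two observations, for every $\rho\in\Omega$,
\[
  \max_{\pi\in\PiD}\sum_{s,a,h}\frac{\big(p_h^\pi(s,a)-p_h^{\pi^\epsilon}(s,a)\big)^2}{\rho_h(s,a)(\Delta(\pi)-\Delta(\pi^\epsilon)+\epsilon)^2}\;\ge\;\frac{1}{(\epsilon\vee\Delta_{\min})^2}\max_{\pi\in\PiD:\,\Delta(\pi)\le\epsilon\vee\Delta_{\min}}d(\pi,\pi^\epsilon) .
\]

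The right-hand side is independent of $\rho$, so it survives the $\min_{\rho\in\Omega}$; taking $\min_{\pi^\epsilon\in\Pi^\epsilon}$, multiplying by $2$, and using the symmetry of $d$ together with~\eqref{eq:diversity} gives
\[
  \cC_{\mathrm{LB}}(\cM,\epsilon)\;\ge\;\frac{2}{(\epsilon\vee\Delta_{\min})^2}\min_{\pi^\epsilon\in\Pi^\epsilon}\ \max_{\pi\in\PiD:\,\Delta(\pi)\le\epsilon\vee\Delta_{\min}}d(\pi^\epsilon,\pi)\;\ge\;\frac{2c}{(\epsilon\vee\Delta_{\min})^2} ,
\]
which completes the reduction set up in the first paragraph.

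I do not expect a real obstacle here: the whole argument is a chain of lower bounds, so any degenerate term — a vanishing denominator $\Delta(\pi)-\Delta(\pi^\epsilon)+\epsilon$ (possible, when $\epsilon>0$, only if $\Delta(\pi^\epsilon)=\epsilon$ and $\pi$ is optimal, in which case $p^\pi\ne p^{\pi^\epsilon}$ and the summand is $+\infty$) or an $(s,a)$ with $\rho_h(s,a)=0$ and nonzero numerator — only enlarges the left-hand side to $+\infty$, which is consistent with every ``$\ge$'' above (and the Cauchy--Schwarz step is simply applied over $\{(s,a):\rho_h(s,a)>0\}$, which carries all the mass of $p_h^\pi-p_h^{\pi^\epsilon}$ in the only case that matters). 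The single nontrivial ingredient is recognizing that the denominator bound $\Delta(\pi)-\Delta(\pi^\epsilon)+\epsilon\le2(\epsilon\vee\Delta_{\min})$ holds precisely on the policy set appearing in~\eqref{eq:diversity}, and that pairing it with the Cauchy--Schwarz/total-variation bound turns~\eqref{eq:diversity} into exactly the $(\epsilon\vee\Delta_{\min})^{-2}$ rate needed to absorb the additive term of Proposition~\ref{prop:PEDEL-lb}.
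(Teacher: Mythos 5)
Your proof is correct and follows essentially the same route as the paper: both reduce the claim to showing that the diversity condition \eqref{eq:diversity} forces a lower bound of order $c/(\epsilon\vee\Delta_{\min})^2$ via the Cauchy--Schwarz/total-variation argument restricted to policies with $\Delta(\pi)\leq\epsilon\vee\Delta_{\min}$, which then absorbs the additive term coming from Proposition~\ref{prop:PEDEL-lb}. The only cosmetic difference is that you apply this argument directly to $\cC_{\mathrm{LB}}(\cM,\epsilon)$ (using $\Delta(\pi)-\Delta(\pi^\epsilon)+\epsilon\leq 2(\epsilon\vee\Delta_{\min})$), whereas the paper applies it to the intermediate quantity $(\star)$ from the proof of Proposition~\ref{prop:PEDEL-lb}, whose denominator is exactly $(\epsilon\vee\Delta_{\min})^2$ on that policy set; both yield the same final constant.
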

Proposition \ref{prop:mdps-pedel-opt} essentially states that, for MDPs where near-optimal policies are sufficiently ``diverse'' (in the sense that for every $\epsilon$-optimal policy there exists a sufficiently distant near-optimal policy), the complexity of PEDEL matches our lower bound up to only multiplicative factors of $H$. There are several classes of MDPs where the ``diversity'' condition \eqref{eq:diversity} is satisfied. For instance, it is sufficient to find two near-optimal policies $\pi^1,\pi^2$ (i.e., such that $\Delta(\pi^1) \vee \Delta(\pi^2) \leq \epsilon \vee \Delta_{\min}$) with $\max_h \mathrm{TV}(p_h^{\pi^1},p_h^{\pi^2}) = 1$ to guarantee that \eqref{eq:diversity} holds with $c=1/4$\footnote{This is because, due to the triangle inequality, $\max\big(\mathrm{TV}(p_h^{\pi^\epsilon},p_h^{\pi^1}), \mathrm{TV}(p_h^{\pi^\epsilon},p_h^{\pi^2})\big) \geq 1/2$ for any $\pi^\epsilon\in\Pi^\epsilon$.}. This happens in either of these cases:
\begin{itemize}
  \item $\pi^1$ and $\pi^2$ deterministically visit some state $s$ at some stage $h$ (i.e., $p_h^{\pi^1}(s)=p_h^{\pi^2}(s)=1$) in which they play different actions (i.e., $\pi_h^1(s) \neq \pi_h^2(s)$). 
  %For instance, it is enough to have two near-optimal actions at the initial state.
  \item $\pi^1$ and $\pi^2$ visit two disjoint sets of states at some stage $h$, i.e., $\{s: p_h^{\pi^1}(s) > 0 \} \cap \{ s: p_h^{\pi^2}(s) > 0\} = \emptyset$.
  \item  $\pi^1$ and $\pi^2$ visit the same states with equal probabilities at some stage $h$ (i.e., $p_h^{\pi^1}(s) = p_h^{\pi^2}(s)$ for any $s$) at which they play different actions (i.e., $\pi_h^1(s) \neq \pi_h^2(s)$ for all $s$). For instance, it is enough to have a constant reward at the last stage (i.e., for some $\alpha$, $r_H(s,a) = \alpha$ for all $s,a$).
%   \item the reward is a constant at the last stage (i.e., for some $\alpha\in\bR$, $r_H(s,a) = \alpha$ for all $s,a$);
\end{itemize}

\smallskip

\begin{remark}
   Upon close inspection of its pseudocode, it seems that PEDEL was designed with the implicit assumption that $\epsilon = \cO(H/d^{3/2})$, where $d$ is the dimension of the linear MDP \footnote{$d = SAH$ in our tabular setting.}. When this assumption is not satisfied (e.g., when $\epsilon = \Omega(1/d)$), the sample complexity of PEDEL can actually be $d$ times larger than $ \cC_{\mathrm{PEDEL}}(\cM,\epsilon)$. We elaborate on this in Appendix \ref{sec:PEDEL-cheats}.
\end{remark}

\section{Conclusion and perspective}

We proposed the  ﬁrst general instance-dependent lower bound for online PAC RL and proved that it is nearly matched by PEDEL \citep{Wagenmaker22linearMDP}. Unfortunately, the algorithm is computationally intractable as it enumerates and stores the set of deterministic policies, which is of size $A^{SH}$, in order to  eliminate suboptimal policies and solve an experimental design of the form
\begin{align}\label{eq:pedel_design}
    \min_{\rho\in\Omega} \max_{\pi\in \Pi_\ell} \sum_{s,a} \frac{\widehat{p}^{\pi, \ell}_h(s,a)^2}{\rho_h(s,a)},
\end{align}
where $\Pi_\ell \subset \PiD$ is the set of active policies at iteration $\ell$ (initialized as $\Pi_0 = \PiD$) and $\widehat{p}^{\pi, \ell}_h(s,a)$ refers to the visitation probabilities of $\pi$ under the empirical MDP $\widehat{\cM}_\ell$.
Therefore, we ask the following question 
\begin{center}
    \emph{Is there a PAC RL algorithm that can (nearly) match our lower bound while requiring a polynomial computational complexity in the size of the MDP?
    %maintaining a computational and memory complexity that are polynomial in $SAH$?
    }
\end{center}
We believe that answering this question would shed light on the (still elusive) problem of instance-optimality in PAC RL. Indeed, if the answer is negative then this would indicate a clear separation between MDPs and bandits, where we know that computationally-efficient instance-optimality is possible \citep{garivier2016optimal, Jedra202OLinearBAI}.

As a starting point to answer the above question, it is natural to wonder whether it is possible to use the same policy-elimination approach as PEDEL while making it computationally efficient. This is precisely the idea of PRINCIPLE \citep{almarjani23covgame}, which performs implicit policy elimination by adding linear constraints to the set of valid state-action distributions. However, while doing so, it only solves an upper bound on the ``optimal'' design (\ref{eq:pedel_design}) used by PEDEL of the form
\begin{align*}
    \min_{\rho\in\Omega} \max_{\eta\in \Omega_\ell} \max_{s,a} \frac{\eta_h(s,a)}{\rho_h(s,a)},
\end{align*}
where $\Omega_\ell$ is the set of valid state-action distributions in $\widehat{\cM}_\ell$ that satisfy certain near-optimality constraints. This makes the sample complexity of the polynomial-time algorithm PRINCIPLE strictly worse than that of PEDEL, thus not matching the lower bound. We leave as an open question whether an implicit policy elimination scheme can be made compatible with the optimal design \eqref{eq:pedel_design}, in which computing the objective itself seems to require enumerating all policies.

\newpage

\bibliography{../biblio_bpi}

\begin{thebibliography}{34}
\providecommand{\natexlab}[1]{#1}
\providecommand{\url}[1]{\texttt{#1}}
\expandafter\ifx\csname urlstyle\endcsname\relax
  \providecommand{\doi}[1]{doi: #1}\else
  \providecommand{\doi}{doi: \begingroup \urlstyle{rm}\Url}\fi

\bibitem[Abbasi-Yadkori et~al.(2011)Abbasi-Yadkori, P{\'a}l, and
  Szepesv{\'a}ri]{abbasi2011improved}
Yasin Abbasi-Yadkori, D{\'a}vid P{\'a}l, and Csaba Szepesv{\'a}ri.
\newblock Improved algorithms for linear stochastic bandits.
\newblock \emph{Advances in Neural Information Processing Systems (NeurIPS)},
  24, 2011.

\bibitem[Agarwal et~al.(2020)Agarwal, Kakade, and Yang]{agarwal20gene}
Alekh Agarwal, Sham~M. Kakade, and Lin~F. Yang.
\newblock Model-based reinforcement learning with a generative model is minimax
  optimal.
\newblock In \emph{Proceedings of the 33rd Conference On Learning Theory
  (COLT)}, 2020.

\bibitem[Al~Marjani et~al.(2021)Al~Marjani, Garivier, and
  Proutiere]{al2021navigating}
Aymen Al~Marjani, Aur{\'e}lien Garivier, and Alexandre Proutiere.
\newblock Navigating to the best policy in markov decision processes.
\newblock \emph{Advances in Neural Information Processing Systems (NeurIPS)},
  34, 2021.

\bibitem[Al-Marjani et~al.(2023)Al-Marjani, Tirinzoni, and
  Kaufmann]{almarjani23covgame}
Aymen Al-Marjani, Andrea Tirinzoni, and Emilie Kaufmann.
\newblock Active coverage for pac reinforcement learning.
\newblock In \emph{Proceedings of the 36th Conference On Learning Theory
  (COLT)}, 2023.

\bibitem[Azar et~al.(2012)Azar, Munos, and Kappen]{Azar2012MinimaxPB}
Mohammad~Gheshlaghi Azar, R{\'e}mi Munos, and Hilbert~J. Kappen.
\newblock Minimax pac bounds on the sample complexity of reinforcement learning
  with a generative model.
\newblock \emph{Machine Learning}, 91:\penalty0 325--349, 2012.

\bibitem[Azar et~al.(2013)Azar, Munos, and Kappen]{azar13GeneJournal}
Mohammad~Gheshlaghi Azar, R{\'{e}}mi Munos, and Hilbert~J. Kappen.
\newblock Minimax {PAC} bounds on the sample complexity of reinforcement
  learning with a generative model.
\newblock \emph{Machine Learning}, 91\penalty0 (3):\penalty0 325--349, 2013.

\bibitem[Boone and Gaujal(2023)]{boone23a}
Victor Boone and Bruno Gaujal.
\newblock Identification of blackwell optimal policies for deterministic
  {MDP}s.
\newblock In \emph{Proceedings of The 26th International Conference on
  Artificial Intelligence and Statistics}, 2023.

\bibitem[Buldygin and Kozachenko(1980)]{Buldygin80SG}
V.V. Buldygin and Y.V. Kozachenko.
\newblock Subgaussian random variables.
\newblock \emph{Ukrainian Mathematical Journal}, 1980.

\bibitem[Dann and Brunskill(2015)]{dann15PAC}
Christoph Dann and Emma Brunskill.
\newblock Sample complexity of episodic fixed-horizon reinforcement learning.
\newblock In \emph{Advances in Neural Information Processing Systems
  (NeurIPS)}, 2015.

\bibitem[Dann et~al.(2019)Dann, Li, Wei, and Brunskill]{Dann2019certificates}
Christoph Dann, Lihong Li, Wei Wei, and Emma Brunskill.
\newblock Policy certificates: Towards accountable reinforcement learning.
\newblock In \emph{International Conference on Machine Learning (ICML)}, 2019.

\bibitem[Dann et~al.(2021)Dann, Marinov, Mohri, and Zimmert]{dann21ReturnGap}
Christoph Dann, Teodor~V. Marinov, Mehryar Mohri, and Julian Zimmert.
\newblock Beyond value-function gaps: Improved instance-dependent regret bounds
  for episodic reinforcement learning.
\newblock In \emph{Advances in Neural Information Processing Systems
  (NeurIPS)}, 2021.

\bibitem[Degenne and Koolen(2019)]{DegenneK19}
R{\'{e}}my Degenne and Wouter~M. Koolen.
\newblock Pure exploration with multiple correct answers.
\newblock In \emph{Advances in Neural Information Processing Systems
  (NeurIPS)}, 2019.

\bibitem[Domingues et~al.(2021)Domingues, M{\'e}nard, Kaufmann, and
  Valko]{domingues2021episodic}
Omar~Darwiche Domingues, Pierre M{\'e}nard, Emilie Kaufmann, and Michal Valko.
\newblock Episodic reinforcement learning in finite mdps: Minimax lower bounds
  revisited.
\newblock In \emph{Algorithmic Learning Theory}, pages 578--598. PMLR, 2021.

\bibitem[Fiechter(1994)]{Fiechter94}
Claude{-}Nicolas Fiechter.
\newblock Efficient reinforcement learning.
\newblock In \emph{Proceedings of the Seventh Conference on Computational
  Learning Theory (COLT)}, 1994.

\bibitem[Garivier and Kaufmann(2016)]{garivier2016optimal}
Aur{\'e}lien Garivier and Emilie Kaufmann.
\newblock Optimal best arm identification with fixed confidence.
\newblock In \emph{Conference on Learning Theory}, 2016.

\bibitem[Garivier and Kaufmann(2021)]{garivier2021nonasymptotic}
Aur{\'e}lien Garivier and Emilie Kaufmann.
\newblock Nonasymptotic sequential tests for overlapping hypotheses applied to
  near-optimal arm identification in bandit models.
\newblock \emph{Sequential Analysis}, 40\penalty0 (1):\penalty0 61--96, 2021.

\bibitem[Jedra and Proutiere(2020)]{Jedra202OLinearBAI}
Yassir Jedra and Alexandre Proutiere.
\newblock Optimal best-arm identification in linear bandits.
\newblock In \emph{Advances in Neural Information Processing Systems
  (NeurIPS)}, 2020.

\bibitem[Jin et~al.(2019)Jin, Yang, Wang, and Jordan]{Jin2019ProvablyER}
Chi Jin, Zhuoran Yang, Zhaoran Wang, and Michael~I. Jordan.
\newblock Provably efficient reinforcement learning with linear function
  approximation.
\newblock In \emph{Annual Conference Computational Learning Theory}, 2019.

\bibitem[Kaufmann et~al.(2016)Kaufmann, Capp{\'e}, and
  Garivier]{kaufmann2016complexity}
Emilie Kaufmann, Olivier Capp{\'e}, and Aur{\'e}lien Garivier.
\newblock On the complexity of best-arm identification in multi-armed bandit
  models.
\newblock \emph{The Journal of Machine Learning Research}, 17\penalty0
  (1):\penalty0 1--42, 2016.

\bibitem[Kaufmann et~al.(2021)Kaufmann, M{\'{e}}nard, Domingues, Jonsson,
  Leurent, and Valko]{Kaufmann21RFE}
Emilie Kaufmann, Pierre M{\'{e}}nard, Omar~Darwiche Domingues, Anders Jonsson,
  Edouard Leurent, and Michal Valko.
\newblock Adaptive reward-free exploration.
\newblock In \emph{Algorithmic Learning Theory (ALT)}, 2021.

\bibitem[Marjani and Proutiere(2021)]{marjani2021adaptive}
Aymen~Al Marjani and Alexandre Proutiere.
\newblock Adaptive sampling for best policy identification in markov decision
  processes, 2021.

\bibitem[M{\'{e}}nard et~al.(2021)M{\'{e}}nard, Domingues, Jonsson, Kaufmann,
  Leurent, and Valko]{Menard21RFE}
Pierre M{\'{e}}nard, Omar~Darwiche Domingues, Anders Jonsson, Emilie Kaufmann,
  Edouard Leurent, and Michal Valko.
\newblock Fast active learning for pure exploration in reinforcement learning.
\newblock In \emph{International Conference on Machine Learning (ICML)}, 2021.

\bibitem[Puterman(1994)]{puterman1994markov}
Martin~L. Puterman.
\newblock \emph{Markov Decision Processes: Discrete Stochastic Dynamic
  Programming}.
\newblock USA, 1st edition, 1994.
\newblock ISBN 0471619779.

\bibitem[R{\'e}da et~al.(2021)R{\'e}da, Tirinzoni, and
  Degenne]{reda2021dealing}
Cl{\'e}mence R{\'e}da, Andrea Tirinzoni, and R{\'e}my Degenne.
\newblock Dealing with misspecification in fixed-confidence linear top-m
  identification.
\newblock \emph{Advances in Neural Information Processing Systems (NeurIPS)},
  34, 2021.

\bibitem[Sidford et~al.(2018)Sidford, Wang, Wu, Yang, and Ye]{sidford18}
Aaron Sidford, Mengdi Wang, Xian Wu, Lin Yang, and Yinyu Ye.
\newblock Near-optimal time and sample complexities for solving markov decision
  processes with a generative model.
\newblock In \emph{Advances in Neural Information Processing Systems
  (NeurIPS)}, 2018.

\bibitem[Sion(1958)]{Sion1958OnGM}
Maurice Sion.
\newblock On general minimax theorems.
\newblock \emph{Pacific Journal of Mathematics}, 8:\penalty0 171--176, 1958.

\bibitem[Taupin et~al.(2022)Taupin, Jedra, and Prouti{\`e}re]{Taupin2022BestPI}
Jerome Taupin, Yassir Jedra, and Alexandre Prouti{\`e}re.
\newblock Best policy identification in linear mdps.
\newblock \emph{ArXiv}, abs/2208.05633, 2022.

\bibitem[Tirinzoni et~al.(2021)Tirinzoni, Pirotta, and
  Lazaric]{Tirinzoni2021AFP}
Andrea Tirinzoni, Matteo Pirotta, and Alessandro Lazaric.
\newblock A fully problem-dependent regret lower bound for finite-horizon mdps.
\newblock \emph{ArXiv}, abs/2106.13013, 2021.

\bibitem[Tirinzoni et~al.(2022)Tirinzoni, Al~Marjani, and
  Kaufmann]{tirinzoni2022NearIP}
Andrea Tirinzoni, Aymen Al~Marjani, and Emilie Kaufmann.
\newblock Near instance-optimal {PAC} reinforcement learning for deterministic
  {MDP}s.
\newblock In \emph{Advances in Neural Information Processing Systems
  (NeurIPS)}, 2022.

\bibitem[Tirinzoni et~al.(2023)Tirinzoni, Marjani, and
  Kaufmann]{tirinzoni23optimistic}
Andrea Tirinzoni, Aymen~Al Marjani, and Emilie Kaufmann.
\newblock Optimistic {PAC} reinforcement learning: the instance-dependent view.
\newblock In \emph{Algorithmic Learning Theory (ALT)}, 2023.

\bibitem[{Wagenmaker} and {Jamieson}(2022)]{Wagenmaker22linearMDP}
Andrew {Wagenmaker} and Kevin {Jamieson}.
\newblock Instance-dependent near-optimal policy identification in linear mdps
  via online experiment design.
\newblock In \emph{Advances in Neural Information Processing Systems
  (NeurIPS)}, 2022.

\bibitem[Wagenmaker et~al.(2022{\natexlab{a}})Wagenmaker, Simchowitz, and
  Jamieson]{wagenmaker21IDPAC}
Andrew Wagenmaker, Max Simchowitz, and Kevin~G. Jamieson.
\newblock Beyond no regret: Instance-dependent {PAC} reinforcement learning.
\newblock In \emph{Conference On Learning Theory (COLT)}, 2022{\natexlab{a}}.

\bibitem[Wagenmaker et~al.(2022{\natexlab{b}})Wagenmaker, Chen, Simchowitz, Du,
  and Jamieson]{pmlr-v162-wagenmaker22b}
Andrew~J Wagenmaker, Yifang Chen, Max Simchowitz, Simon Du, and Kevin Jamieson.
\newblock Reward-free {RL} is no harder than reward-aware {RL} in linear
  {M}arkov decision processes.
\newblock In \emph{Proceedings of the 39th International Conference on Machine
  Learning (ICML)}, 2022{\natexlab{b}}.

\bibitem[Zanette et~al.(2019)Zanette, Kochenderfer, and
  Brunskill]{zanette2019almost}
Andrea Zanette, Mykel~J. Kochenderfer, and Emma Brunskill.
\newblock Almost horizon-free structure-aware best policy identification with a
  generative model.
\newblock In \emph{Advances in Neural Information Processing Systems
  (NeurIPS)}, pages 5626--5635, 2019.

\end{thebibliography}
\bibliographystyle{plainnat}

%%%%%%%%%%%%%%%%%%%%%%%%%%%%%%%%%%%%%%%%%%%%%%%%%%%%%%%%%%%%%%%%%%%%%%%%%%%%%%%
%%%%%%%%%%%%%%%%%%%%%%%%%%%%%%%%%%%%%%%%%%%%%%%%%%%%%%%%%%%%%%%%%%%%%%%%%%%%%%%
\newpage
\appendix

% Appendix table of contents
\part{Appendix}

% \input{app_aftermainpaperdeadline}

% \clearpage
\parttoc
\newpage

\section{Proof of Theorem \ref{thm:general-lb-epsilon}}\label{sec:app_lower_bound}

As mentioned before, our proof is inspired by the one from \cite{DegenneK19}. The key differences are in Lemma \ref{lem:charateristic-time-simplified} which explicits the shape of the characteristic time for the PAC RL problem and Lemma \ref{lem:likelihood-ratio} which relies on a slightly different martingale construction to concentrate the likelihood ratio. Indeed, our martingale involves the expected number of visits to state-action pairs instead of the actual number of visits as in \cite{DegenneK19}, which is crucial to obtain the navigation constraints $\rho\in \Omega$ in the optimization program of the lower bound. 

\paragraph{Notation}For any $\pi^\epsilon \in \Pi^{\epsilon}$, we define the set of alternative MDPs that have the same transitions as $\cM$ but in which $\pi^\epsilon$ is no longer $\epsilon$-optimal: \[\alt{\pi^\epsilon} := \left\{\widetilde{\cM} \in \mathfrak{M}_1:\ \forall (h,s,a),\ p_h(\cdot|s,a; \widetilde{\cM}) = p_h(\cdot|s,a; \cM)\ \textrm{and}\ \exists \pi\in \Pi^D,\  V_1^{\widetilde{\cM}, \pi^{\epsilon}} < V_1^{\widetilde{\cM}, \pi}-\epsilon \right\}.\] Finally, we define the characteristic time to learn that $\pi^\epsilon$ is $\epsilon$-optimal as
\begin{align*}
    T(\cM, \pi^\epsilon, \epsilon) := \bigg(\sup_{\rho\in \Omega} \inf_{\widetilde{\cM}\in \alt{\pi^\epsilon}} \sum_{h,s,a} \rho_h(s,a) \frac{\big(r_h^{\widetilde{\cM}}(s,a) - r_h^{\cM}(s,a)\big)^2}{2} \bigg)^{-1}.
\end{align*} 
Further, for any set of MDPs $E \subset \mathfrak{M}_1$, we let $\overline{E}$ denote the closure of $E$ where the limit points are defined w.r.t. the distance $d(\cM, \cM') := \max_{h,s,a} |r_h^\cM(s,a) -r_h^{\cM'}(s,a)|$.
\begin{proof}
Let $\xi \in (0,1)$ and define $T := (1-\xi) \min_{\pi^\epsilon \in \Pi^\epsilon}T(\cM, \pi^\epsilon, \epsilon) \log(1/\delta)$\footnote{For simplicity, we assume the latter is an integer.}.  Thanks to Markov's inequality we have that 
\begin{align}\label{ineq:lb-markov}
    \bE_\cM[\tau] \geq T ( 1-\bP_{\cM}(\tau < T)) .
\end{align}
We will now upper bound the probability on the right-hand side above. Since the algorithm is $(\epsilon, \delta)$-PAC We have that 
\begin{align}\label{ineq:proba-union-bound}
 \bP_{\cM}(\tau < T) &=  \bP_{\cM}\big(\widehat{\pi}\notin \Pi^\epsilon, \tau < T\big) + \sum_{\pi^\epsilon \in \Pi^\epsilon} \bP_{\cM}\big(\widehat{\pi}= \pi^\epsilon, \tau <T\big) \nonumber\\
 &\leq \delta + \sum_{\pi^\epsilon \in \Pi^\epsilon} \bP_{\cM}\big(\widehat{\pi}= \pi^\epsilon, \tau <T\big).
\end{align}
Now we fix $\pi^\epsilon \in \Pi^\epsilon$ and apply Lemma \ref{lem:change-of-measure} for the event $\cC = \big(\widehat{\pi}= \pi^\epsilon, \tau <T \big) \in \cF_T$, which yields that there exist $\widetilde{\cM}_1,\ldots, \widetilde{\cM}_{SAH+1} \in \overline{\alt{\pi^\epsilon}}$ and $(\sigma_i)_{1\leq i\leq SAH+1}\in \mathbb{R}_{+}^{SAH+1}$ such that, for all $y>0$,
\begin{align}\label{ineq:invoke-change-of-measure}
    \hspace{-1cm} \bP_{\cM}\big(\widehat{\pi}= \pi^\epsilon, \tau <T\big) &\leq  \exp\big(y+\frac{T}{T(\cM, \pi^\epsilon, \epsilon)}\big) \max_{1\leq i \leq SAH+1} \bP_{\widetilde{\cM}_i}(\widehat{\pi}= \pi^\epsilon, \tau <T\big) \nonumber\\
    & \quad + \sum_{i=1}^{SAH+1} \exp\big(-\frac{y^2}{2 T\sigma_i^2} \big)\nonumber \\
& = \delta^{\xi-1}\exp(y) \max_{1\leq i \leq SAH+1} \bP_{\widetilde{\cM}_i}(\widehat{\pi}= \pi^\epsilon, \tau <T\big) + \sum_{i=1}^{SAH+1} \exp\big(-\frac{y^2}{2 T\sigma_i^2} \big). 
\end{align}
Now for any $i\in [|1, SAH+1|]$ since $\widetilde{\cM}_i \in \overline{\alt{\pi^\epsilon}}$ there exists a sequence of MDPs $(\cM'_n)_{n\geq 1}$ with values in $\alt{\pi^\epsilon}$ such that $ \lim_{n\to \infty} \cM'_n = \widetilde{\cM}_i$\footnote{Recall that the convergence was defined w.r.t. the distance $d(\cM, \cM') := \max_{h,s,a} |r_h^\cM(s,a) -r_h^{\cM'}(s,a)|$}.

By definition of $\alt{\pi^\epsilon}$, we have that $\bP_{\cM'_n}(\widehat{\pi}= \pi^\epsilon, \tau <T\big) \leq \bP_{\cM'_n}(\widehat{\pi}= \pi^\epsilon) \leq \delta$ for all $n\geq 1$. Therefore
\begin{align}\label{ineq:proba-M-tilde}
   \bP_{\widetilde{\cM}_i}(\widehat{\pi}= \pi^\epsilon, \tau <T\big) &\leq \bP_{\widetilde{\cM}_i}(\widehat{\pi}= \pi^\epsilon)\nonumber\\
   &\stackrel{(a)}{\leq} \liminf_{n\to \infty}\bP_{\cM'_n}(\widehat{\pi}= \pi^\epsilon) \leq \delta,
\end{align}
where (a) uses Fatou's lemma.
Combining (\ref{ineq:invoke-change-of-measure}) with (\ref{ineq:proba-M-tilde}) for the value $y = \xi\log(1/\delta)/2$ yields
\begin{align}\label{ineq:individual-proba-pi-epsilon}
 \bP_{\cM}\big(\widehat{\pi}= \pi^\epsilon, \tau <T\big)&\leq \delta^{\xi}\exp(y)+ \sum_{i=1}^{SAH+1} \exp\bigg(-\frac{y^2}{2 T\sigma_i^2} \bigg)\nonumber\\
 &\stackrel{(a)}{=} \delta^{\xi/2}+ \sum_{i=1}^{SAH+1} \exp\bigg(- \frac{\xi^2\log(1/\delta)}{4 (1-\xi) \min_{\pi^\epsilon \in \Pi^\epsilon}T(\cM, \pi^\epsilon, \epsilon) \sigma_i^2} \bigg),
\end{align}
where (a) uses the definition of $T$. Therefore $\lim_{\delta\to 0}  \bP_{\cM}\big(\widehat{\pi}= \pi^\epsilon, \tau <T\big) = 0$. This, combined with (\ref{ineq:proba-union-bound}) gives that $ \lim_{\delta\to 0} \bP_{\cM}(\tau < T) = 0$. Plugging this back into (\ref{ineq:lb-markov}) and using the definition of $T$ yields
$$\liminf_{\delta\to 0} \frac{\bE_\cM[\tau]}{\log(1/\delta)} \geq (1-\xi) \min_{\pi^\epsilon \in \Pi^\epsilon}T(\cM, \pi^\epsilon, \epsilon).$$
To finish the proof of Theorem \ref{thm:general-lb-epsilon}, we take the limit when $\xi$ goes to zero and use the simplified expression of the characteristic time given in Lemma \ref{lem:charateristic-time-simplified}.
\end{proof}

\subsection{The change-of-measure argument}
\begin{lemma}\label{lem:change-of-measure}
Consider $(\widetilde{\cM}_i)_{1\leq i \leq SAH+1} \in \overline{\alt{\pi^\epsilon}}^{SAH+1}$ given by Lemma \ref{lem:T-max-min-game} and let $T\geq 1$. Then for any event $\cC \in \cF_T$ and any $y > 0$ we have

\begin{align*}
   \bP_\cM(C) \leq \exp\big(y+\frac{T}{T(\cM, \pi^\epsilon, \epsilon)}\big) \max_{1\leq i \leq SAH+1} \bP_{\widetilde{\cM}_i}(C) + \sum_{i=1}^{SAH+1} \exp\big(- \frac{y^2}{2 T\sigma_i^2} \big),
\end{align*}
where $\sigma_i^2 := \frac{H^2}{4}d(\cM,\widetilde{\cM}_i)^2(1+d(\cM,\widetilde{\cM}_i))^2$.
\end{lemma}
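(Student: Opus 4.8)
## Proof Plan for Lemma~\ref{lem:change-of-measure}

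The plan is to implement a change-of-measure argument of the Wald/Kaufmann-type, but with a twist that is essential for the PAC RL setting: rather than changing measure to a single alternative MDP, I will change measure simultaneously to a finite collection $(\widetilde{\cM}_i)_{1\leq i\leq SAH+1}$ that forms a ``good cover'' of the worst-case alternative direction, as provided by Lemma~\ref{lem:T-max-min-game}. The reason $SAH+1$ alternatives appear is that the max–min game defining $T(\cM,\pi^\epsilon,\epsilon)$ lives over the simplex-like set $\Omega$ embedded in $\mathbb{R}^{SAH}$, so by a minimax/Carathéodory argument the optimal $\rho$ puts its mass against at most $SAH+1$ extreme alternatives; the concentration must then be controlled uniformly over all of them.

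\textbf{Step 1: the per-alternative likelihood-ratio martingale.} For each fixed $i$, consider the log-likelihood ratio $L_t^i := \log \frac{d\bP_\cM^{\cF_t}}{d\bP_{\widetilde{\cM}_i}^{\cF_t}}$. Since $\cM$ and $\widetilde{\cM}_i$ share transitions and differ only in the Gaussian reward means, $L_t^i$ telescopes into a sum over stages $h$ and visited pairs $(s,a)$ of terms of the form $(R-r^\cM_h(s,a))(r^\cM_h(s,a)-r^{\widetilde\cM_i}_h(s,a)) + \tfrac12(r^\cM_h(s,a)-r^{\widetilde\cM_i}_h(s,a))^2$. The key modeling choice (as flagged in the preamble to this appendix) is to center this not against the realized visit counts $n_h^t(s,a)$ but against their conditional expectations $p_h^{\pi^t}(s,a)$: write $L_t^i = M_t^i + B_t^i$, where $M_t^i$ is a martingale w.r.t.\ $(\cF_t)$ (mean-zero increments, the ``noise'' part plus the fluctuation of visit counts around their one-step predictable means) and $B_t^i$ is the compensator. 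One checks $B_t^i \le \tfrac{T}{T(\cM,\pi^\epsilon,\epsilon)}$ on $\{t\le T\}$: indeed $B_T^i = \sum_{h,s,a} \big(\sum_{t\le T} p_h^{\pi^t}(s,a)\big)\tfrac{(r^\cM_h(s,a)-r^{\widetilde\cM_i}_h(s,a))^2}{2}$, and dividing by $T$ gives $\rho\in\Omega$ times the per-alternative KL-rate, which is $\le 1/T(\cM,\pi^\epsilon,\epsilon)$ by definition of the characteristic time and the fact that $\widetilde\cM_i\in\overline{\mathrm{alt}(\pi^\epsilon)}$ realizes (in the limit) the relevant infimum.

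\textbf{Step 2: uniform martingale control.} For the martingale part, I need $\bP_\cM(\exists t\le T: M_t^i \ge y)\le \exp(-y^2/(2T\sigma_i^2))$, which follows from a maximal Azuma–Hoeffding / sub-Gaussian inequality once I verify that the increments of $M_t^i$ are $\sigma_i^2$-sub-Gaussian conditionally, with $\sigma_i^2 = \tfrac{H^2}{4}d(\cM,\widetilde\cM_i)^2(1+d(\cM,\widetilde\cM_i))^2$. This variance proxy is where the horizon enters: each episode contributes at most $H$ reward-difference terms, each of magnitude at most $d(\cM,\widetilde\cM_i)$, and the extra $(1+d)$ factor absorbs the boundedness-free fluctuation of the visit-count indicators around $p_h^{\pi^t}(s,a)$. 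Then a union bound over the $SAH+1$ alternatives yields $\bP_\cM(\exists i,\ \exists t\le T: M_t^i\ge y)\le \sum_i \exp(-y^2/(2T\sigma_i^2))$.

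\textbf{Step 3: assembling the bound.} On the complement event and on $\cC\cap\{\tau<T\}\subseteq \cC$ (recall $\cC\in\cF_T$), we have $L_T^i \le y + T/T(\cM,\pi^\epsilon,\epsilon)$ for \emph{every} $i$, hence $\frac{d\bP_\cM^{\cF_T}}{d\bP_{\widetilde\cM_i}^{\cF_T}} \le \exp(y + T/T(\cM,\pi^\epsilon,\epsilon))$ on that event. A standard change-of-measure inequality — integrate $\mathds{1}_\cC$ against $\bP_\cM$, split according to whether the likelihood ratio is controlled, bound the controlled part by $\exp(y+T/T(\cM,\pi^\epsilon,\epsilon))\bP_{\widetilde\cM_i}(\cC)$ and the uncontrolled part by the martingale tail — gives $\bP_\cM(\cC)\le \exp(y+T/T(\cM,\pi^\epsilon,\epsilon))\bP_{\widetilde\cM_i}(\cC) + \sum_i\exp(-y^2/(2T\sigma_i^2))$ for each $i$; taking the minimum over $i$ (equivalently, bounding $\bP_{\widetilde\cM_i}(\cC)$ by its max and keeping the same alternative-dependent exponent is fine since the first term is what we want) yields the stated inequality with $\max_i \bP_{\widetilde\cM_i}(\cC)$ on the right.

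\textbf{Main obstacle.} The delicate point is Step~2: getting the right sub-Gaussian proxy $\sigma_i^2$ for the martingale $M_t^i$ built from \emph{expected} rather than realized visit counts. The increment over episode $t$ mixes a genuinely Gaussian reward-noise term (easy, sub-Gaussian with parameter $\propto \sum_h (r^\cM - r^{\widetilde\cM_i})^2 \le H d(\cM,\widetilde\cM_i)^2$) with a bounded-but-not-Gaussian term coming from $\mathds{1}\{(s_h^t,a_h^t)=(s,a)\} - p_h^{\pi^t}(s,a)$ weighted by the reward differences; one must bound the MGF of this mixed increment carefully, which is exactly where the factor $(1+d(\cM,\widetilde\cM_i))^2$ and the $H^2/4$ come from. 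This is precisely the ``slightly different martingale construction'' the authors highlight as the key departure from \cite{DegenneK19}, and it is what forces the navigation constraint $\rho\in\Omega$ to appear in the final optimization problem.
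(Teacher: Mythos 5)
Your overall architecture (decompose each log-likelihood ratio into a martingale plus a drift, control the martingale via Azuma and a union bound, then change measure on the good event) is the right one, and your account of the sub-Gaussian proxy $\sigma_i^2$ matches what the paper's Lemma~\ref{lem:likelihood-ratio} actually proves. But Step 1 contains a genuine gap that propagates to Step 3: you claim that for \emph{each} $i$ the compensator satisfies $B_T^i \leq T/T(\cM,\pi^\epsilon,\epsilon)$, on the grounds that $\widetilde{\cM}_i$ ``realizes (in the limit) the relevant infimum.'' This is false as stated. Writing $\rho = \bE_\cM[n^T]/T \in \Omega$ for the algorithm's allocation and $d_i = \big[(r_h^{\widetilde\cM_i}(s,a)-r_h^\cM(s,a))^2/2\big]_{h,s,a}$, the drift is $T\,\rho^\top d_i$. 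The alternative $\widetilde{\cM}_i$ attains the inner infimum only against the saddle-point allocation $\rho^\star$; against the algorithm's own $\rho$, which is arbitrary, a single $\rho^\top d_i$ can strictly exceed $T(\cM,\pi^\epsilon,\epsilon)^{-1}$ (the definition of the characteristic time only gives $\inf_{\widetilde\cM}\rho^\top d_{\widetilde\cM} \le \sup_{\rho'}\inf_{\widetilde\cM}(\rho')^\top d_{\widetilde\cM}$, and $d_i$ need not achieve that infimum for this $\rho$). So the event on which $L_T^i \le y + T/T(\cM,\pi^\epsilon,\epsilon)$ holds \emph{for every} $i$ is not controlled by your argument.

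This is exactly why the paper changes measure to the \emph{mixture} $\bQ = \sum_i \lambda_i^\star \bP_{\widetilde\cM_i}$ rather than to each $\widetilde\cM_i$ separately: by Jensen's inequality $L_T(\bP_\cM,\bQ) \le \sum_i \lambda_i^\star L_T(\bP_\cM,\bP_{\widetilde\cM_i})$, and the \emph{weighted} drift $\sum_i \lambda_i^\star\, \rho^\top d_i = \rho^\top d^\star$ is bounded by $T(\cM,\pi^\epsilon,\epsilon)^{-1}$ uniformly over $\rho\in\Omega$ — this is precisely the second (saddle-point) statement of Lemma~\ref{lem:T-max-min-game}, which your proposal never invokes. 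An alternative repair that stays closer to your per-alternative scheme: since $\sum_i\lambda_i^\star\,\rho^\top d_i \le T(\cM,\pi^\epsilon,\epsilon)^{-1}$, there exists at least one index $i^*$ with $\rho^\top d_{i^*} \le T(\cM,\pi^\epsilon,\epsilon)^{-1}$, and changing measure to that single $\widetilde\cM_{i^*}$ suffices (its probability is bounded by the max over $i$, and you keep the union bound over all $i$ for the martingale tails). Note, however, that for $i^*$ to be a legitimate deterministic choice you must center the compensator on the unconditional expected counts $\bE_\cM[n_h^T(s,a)]$, as the paper does, rather than on the predictable (random) quantities $p_h^{\pi^t}(s,a)$ you propose; with your centering, $i^*$ would be random and the change-of-measure step would break.
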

\begin{proof}
Consider the simplex vector $\lambda^\star \in \Delta_{SAH+1}$ given by Lemma \ref{lem:T-max-min-game}. We define the mixture distribution $ \bQ = \sum_{i=1}^{SAH+1} \lambda_i^\star \bP_{\widetilde{\cM}_i}$ and the corresponding log-likelihood ratio 
$$
  L_T(\bP_\cM, \bQ) := \ln\frac{d\bP_\cM}{d\bQ}(\cH_T) . 
$$
Using Lemma 3.1 from \citep{garivier2021nonasymptotic} we have that for any event $C \in \cF_T$ and any $x>0$, 
\begin{align}\label{ineq:low-level-change-of-measure}
  \hspace{1.5cm} \bP_\cM(C) \leq e^x \bQ(C) + \bP_\cM(L_T(\bP_\cM, \bQ) > x). 
\end{align}
We bound each term in the right-hand side separately. Since $\lambda^\star \in \Delta_{SAH+1}$, for any event $C$,
\begin{align}\label{ineq:change-measure-bound-max-prob}
    \hspace{1.5cm} \bQ(C) &= \sum_{i=1}^{SAH+1} \lambda_i^\star \bP_{\widetilde{\cM}_i}(C) \leq \max_{1\leq i \leq SAH+1} \bP_{\widetilde{\cM}_i}(C)
\end{align}
On the other hand, we have that 
\begin{align*}
 L_T(\bP_\cM, \bQ) &\stackrel{(a)}{\leq} \sum_{i=1}^{SAH+1} \lambda_i^\star \ln\frac{d\bP_\cM}{d\bP_{\widetilde{\cM}_i}}\bigg((s_1^t, a_1^t, R_1^t, \ldots, s_H^t, a_H^t, R_H^t)_{1\leq t \leq T} \bigg)\\
 &= \sum_{i=1}^{SAH+1} \lambda_i^\star L_T(\bP_\cM,\bP_{\widetilde{\cM}_i})\\
 &\stackrel{(b)}{=} \sum_{i=1}^{SAH+1} \lambda_i^\star M_T(\bP_\cM,\bP_{\widetilde{\cM}_i})+  \sum_{i=1}^{SAH+1} \lambda_i^\star \sum_{h,s,a} \bE_\cM[n^T_h(s,a)] \frac{\big(r_h^{\widetilde{\cM}_i}(s,a) - r_h^{\cM}(s,a)\big)^2}{2}\\
 &= \sum_{i=1}^{SAH+1} \lambda_i^\star M_T(\bP_\cM,\bP_{\widetilde{\cM}_i})+ T \sum_{i=1}^{SAH+1} \lambda_i^\star \sum_{h,s,a} \frac{\bE_\cM[n^T_h(s,a)]}{T} \frac{\big(r_h^{\widetilde{\cM}_i}(s,a) - r_h^{\cM}(s,a)\big)^2}{2}\\
 &\stackrel{(c)}{\leq} \sum_{i=1}^{SAH+1} \lambda_i^\star M_T(\bP_\cM,\bP_{\widetilde{\cM}_i}) + \frac{T}{T(\cM, \pi^\epsilon, \epsilon)},
\end{align*}

where (a) uses the convexity of $x \mapsto \log(1/x)$ and Jensen's inequality, (b)
uses Lemma \ref{lem:likelihood-ratio} and (c) uses the second statement of Lemma \ref{lem:T-max-min-game} and the fact that the vector $\big[\frac{\bE_\cM[n^T_h(s,a)]}{T}\big]_{h,s,a}$ belongs to $\Omega(\cM)$. Therefore for any $y>0$, we have that
\begin{align}\label{ineq:log-likelihood-azuma}
\bP_\cM\bigg(L_T(\bP_\cM, \bQ) > \frac{T}{T(\cM, \pi^\epsilon, \epsilon)}+ y \bigg) &\leq \bP_\cM\bigg(\sum_{i=1}^{SAH+1} \lambda_i^\star M_T(\bP_\cM,\bP_{\widetilde{\cM}_i}) > y\bigg)\nonumber \\
&\leq \sum_{i=1}^{SAH+1} \bP_\cM\bigg(M_T(\bP_\cM,\bP_{\widetilde{\cM}_i}) > y \bigg)\nonumber \\
&\leq \sum_{i=1}^{SAH+1} \exp\bigg(- \frac{y^2}{2 T \sigma_i^2} \bigg),
\end{align}

where in the last line we defined $\sigma_i^2 := \frac{H^2}{4}d(\cM,\widetilde{\cM}_i)^2(1+d(\cM,\widetilde{\cM}_i))^2$ and used Azuma-Hoeffding inequality along with Lemma \ref{lem:likelihood-ratio}. Combining (\ref{ineq:change-measure-bound-max-prob}) and (\ref{ineq:log-likelihood-azuma}) with (\ref{ineq:low-level-change-of-measure}) for $x = \frac{T}{T(\cM, \pi^\epsilon, \epsilon)}+ y$ gives the result.
\end{proof}

\subsection{A max-min game formulation}
We define $\Delta_{SAH+1} := \{\lambda\in\mathbb{R}_{+}^{SAH+1}:\ \sum_{i=1}^{SAH+1} \lambda_i = 1 \}$ to be the simplex of dimension $SAH$. Further, for any set of MDPs $E \subset \mathfrak{M}_1$, we let $\overline{E}$ denote the closure of $E$ where the convergence is defined w.r.t. the distance $d(\cM, \cM') := \max_{h,s,a} |r_h^\cM(s,a) -r_h^{\cM'}(s,a)|$. $\mathrm{Conv}(E)$ refers to the convex hull of $E$. Finally, we define the set of KL-divergence vectors generated by alternative instances in $\alt{\pi^\epsilon}$, 
\begin{align*}
\cD(\pi^\epsilon) := \bigg\{\bigg[\frac{\big(r_h^{\widetilde{\cM}}(s,a) - r_h^{\cM}(s,a)\big)^2}{2}\bigg]_{h,s,a}\in \mathbb{R}^{SAH}\ \textrm{s.t.}\ \widetilde{\cM}\in \alt{\pi^\epsilon} \bigg\}. 
\end{align*}
\begin{lemma}\label{lem:T-max-min-game}
Fix $\pi^\epsilon \in \Pi^{\epsilon}$.   

Then there exists $\rho^\star \in \Omega, \lambda^\star \in \Delta_{SAH+1}$ and $\widetilde{\cM}_1, \ldots, \widetilde{\cM}_{SAH+1} \in \overline{\alt{\pi^\epsilon}}$ such that 
\begin{align*}
  T(\cM, \pi^\epsilon, \epsilon)^{-1} = \sum_{i=1}^{SAH+1} \lambda_i^\star\bigg[\sum_{h,s,a} \rho_h^\star(s,a) \frac{\big(r_h^{{\widetilde{\cM}_i}}(s,a) - r_h^{\cM}(s,a)\big)^2}{2} \bigg].
\end{align*}
Furthermore, for any $\rho \in \Omega$ we have that 
\begin{align*}
    \sum_{i=1}^{SAH+1} \lambda_i^\star\bigg[\sum_{h,s,a} \rho_h (s,a) \frac{\big(r_h^{{\widetilde{\cM}_i}}(s,a) - r_h^{\cM}(s,a)\big)^2}{2} \bigg] &\leq T(\cM, \pi^\epsilon, \epsilon)^{-1}.
\end{align*}
\end{lemma}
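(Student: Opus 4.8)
The plan is to recognize $T(\cM,\pi^\epsilon,\epsilon)^{-1} = \sup_{\rho\in\Omega}\inf_{\widetilde{\cM}\in\alt{\pi^\epsilon}}\langle \rho, d_{\widetilde{\cM}}\rangle$, where $d_{\widetilde{\cM}} := \big[(r_h^{\widetilde{\cM}}(s,a)-r_h^{\cM}(s,a))^2/2\big]_{h,s,a}$, as the value of a two-player zero-sum game with bilinear payoff. The first step is to replace the inner $\inf$ over the set $\cD(\pi^\epsilon)$ of divergence vectors by an $\inf$ over its closed convex hull $\overline{\mathrm{Conv}}(\cD(\pi^\epsilon))$: since the objective $\rho\mapsto\langle\rho,d\rangle$ is linear in $d$, minimizing over a set and over its convex hull give the same value, and passing to the closure does not change the infimum either (it may only turn it into a minimum). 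This lets me work with a closed convex set $\cK := \overline{\mathrm{Conv}}(\cD(\pi^\epsilon))\subseteq\bR^{SAH}_{\ge 0}$.

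Next I would apply Sion's minimax theorem to $\sup_{\rho\in\Omega}\inf_{d\in\cK}\langle\rho,d\rangle$: the function is bilinear (hence concave-convex), $\Omega$ is a compact convex polytope (as recalled in Section~\ref{sec:prelim}), and $\cK$ is convex; one needs $\cK$ to be compact or else to argue the $\inf$ is attained on a bounded region — here boundedness can be obtained because $T(\cM,\pi^\epsilon,\epsilon)>0$ forces the relevant part of $\cK$ to stay away from $0$, and the $\sup_\rho$ pushes toward small-norm $d$, so WLOG we may intersect $\cK$ with a large ball. After the swap, we get $T^{-1} = \min_{d\in\cK}\max_{\rho\in\Omega}\langle\rho,d\rangle$, and there is a minimizing $d^\star\in\cK$ and a maximizing $\rho^\star\in\Omega$ forming a saddle point, so $\langle\rho,d^\star\rangle\le T^{-1} = \langle\rho^\star,d^\star\rangle$ for all $\rho\in\Omega$, which is exactly the ``furthermore'' inequality once $d^\star$ is expressed through MDPs. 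The final step is Carathéodory's theorem: $d^\star$ lies in $\overline{\mathrm{Conv}}(\cD(\pi^\epsilon))$, which in $\bR^{SAH}$ means $d^\star = \sum_{i=1}^{SAH+1}\lambda_i^\star d^{(i)}$ with $\lambda^\star\in\Delta_{SAH+1}$ and each $d^{(i)}$ a limit of points of $\cD(\pi^\epsilon)$, i.e. $d^{(i)}$ is the divergence vector of some $\widetilde{\cM}_i\in\overline{\alt{\pi^\epsilon}}$; substituting gives both displayed identities.

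The main obstacle I anticipate is the topological/compactness bookkeeping: $\alt{\pi^\epsilon}$ is defined by strict inequalities $V_1^{\widetilde{\cM},\pi^\epsilon}<V_1^{\widetilde{\cM},\pi}-\epsilon$, so it is not closed, and $\cD(\pi^\epsilon)$ need not be closed or bounded; I must carefully justify (i) that passing to $\overline{\mathrm{Conv}}(\cD(\pi^\epsilon))$ preserves the game value, (ii) that the relevant minimization can be restricted to a compact set so Sion applies and the minimum is attained, and (iii) that each extreme point $d^{(i)}$ of the optimal convex combination is genuinely realized by a (limit of) MDPs in $\alt{\pi^\epsilon}$, which is what licenses writing $\widetilde{\cM}_i\in\overline{\alt{\pi^\epsilon}}$. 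Everything else — bilinearity, Sion's theorem, Carathéodory — is routine once the sets are set up correctly.
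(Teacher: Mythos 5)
Your plan follows essentially the same route as the paper's proof: rewrite $T(\cM,\pi^\epsilon,\epsilon)^{-1}$ as a bilinear game between $\rho\in\Omega$ and the set of divergence vectors $\cD(\pi^\epsilon)$, pass to the closure and convex hull (which leaves the inner infimum unchanged by linearity), invoke Sion's minimax theorem to obtain a saddle point $(\rho^\star,d^\star)$ yielding the ``furthermore'' inequality, and decompose $d^\star$ via Carath\'eodory into $SAH+1$ points realized by MDPs in $\overline{\alt{\pi^\epsilon}}$. The only slight difference is that the paper works with $\mathrm{Conv}(\overline{\cD(\pi^\epsilon)})$ (convex hull of the closure) rather than the closed convex hull, which makes the Carath\'eodory decomposition into points of $\overline{\cD(\pi^\epsilon)}$ immediate — a detail your step (iii) would need to handle.
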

\begin{proof}
Observe that we can rewrite the expression of the characteristic time $T(\cM, \pi^\epsilon, \epsilon)$ as follows,
\begin{align}\label{eq:characteristic-time-maxmin}
  T(\cM, \pi^\epsilon, \epsilon)^{-1} &= \sup_{\rho\in \Omega} \inf_{\widetilde{\cM}\in \alt{\pi^\epsilon}} \sum_{h,s,a} \rho_h(s,a) \frac{\big(r_h^{\widetilde{\cM}}(s,a) - r_h^{\cM}(s,a)\big)^2}{2} \nonumber\\
  &= \sup_{\rho\in \Omega} \inf_{\widetilde{d}\in \cD(\pi^\epsilon)} \rho^{\top}\widetilde{d} \nonumber\\
  &= \sup_{\rho\in \Omega} \inf_{\widetilde{d}\in \overline{\cD(\pi^\epsilon)}} \rho^{\top}\widetilde{d} \nonumber\\
  &= \sup_{\rho\in \Omega} \inf_{\widetilde{d}\in \mathrm{Conv}(\overline{\cD(\pi^\epsilon)})} \rho^{\top}\widetilde{d},
\end{align}
where $\mathrm{Conv}(\cD(\pi^\epsilon))$ denotes the convex hull of $\cD(\pi^\epsilon)$. Now let $(\rho^\star, d^\star)$ be an optimal solution to (\ref{eq:characteristic-time-maxmin}). Since $\cD(\pi^\epsilon)\subset \mathbb{R}^{SAH}$, by Carathéodory's extension theorem we have that there exists $\lambda^\star \in \Delta_{SAH+1}$ and $d_1,\ldots, d_{SAH+1} \in \overline{\cD(\pi^\epsilon)}$ such that $d^\star = \sum_{i=1}^{SAH+1} \lambda_i^\star d_i$. This means that there exists $\rho^\star \in \Omega$ and $\widetilde{\cM}_1, \ldots, \widetilde{\cM}_{SAH+1} \in \overline{\alt{\pi^\epsilon}}$ such that 
\begin{align*}
  T(\cM, \pi^\epsilon, \epsilon)^{-1} &= (\rho^\star)^\top d^\star\\
  &= \sum_{i=1}^{SAH+1} \lambda_i^\star (\rho^\star)^\top d_i\\
  &= \sum_{i=1}^{SAH+1} \lambda_i^\star\bigg[\sum_{h,s,a} \rho_h^\star(s,a) \frac{\big(r_h^{{\widetilde{\cM}_i}}(s,a) - r_h^{\cM}(s,a)\big)^2}{2} \bigg]. 
\end{align*}
This proves the first statement. Now for the second statement, using Sion's minimax theorem (\cite{Sion1958OnGM}, Theorem 3.4) we know that
\begin{align*}
    (\rho^\star)^\top d^\star = \sup_{\rho\in \Omega} \inf_{\widetilde{d}\in \mathrm{Conv}(\overline{\cD(\pi^\epsilon)})} \rho^{\top}\widetilde{d} = \inf_{\widetilde{d}\in \mathrm{Conv}(\overline{\cD(\pi^\epsilon)})} \sup_{\rho\in \Omega} \rho^{\top}\widetilde{d},
\end{align*}
i.e $(\rho^\star, d^\star)$ is a saddle point of  (\ref{eq:characteristic-time-maxmin}). This means that for all $\rho \in \Omega$
\begin{align*}
    \rho^\top d^\star \leq (\rho^\star)^\top d^\star = T(\cM, \pi^\epsilon, \epsilon)^{-1}.
\end{align*}
Expanding the left-hand side proves the second statement.
\end{proof}

\subsection{Log-likelihood ratio for MDPs with the same transition kernel}

In the following we fix an algorithm $\mathfrak{A}$. For $T\geq 1$ we define the history up to the end of episode $T$ as $\cH_T := (s_1^t, a_1^t, R_1^t, \ldots, s_H^t, a_H^t, R_H^t, \indi{t \leq \tau_{\delta}})_{1\leq t \leq T}$. For any MDP $\cM$, we write $\bP_{\cM}$ to denote the probability distribution over possible histories when $\mathfrak{A}$ interacts with $\cM$\footnote{Since we will be considering the same algorithm $\mathfrak{A}$ interacting with different MDPs, we do not index the probability distributions by $\mathfrak{A}$.}. Further $(\cF_T)_{T \geq 1}$ will denote the sigma algebra generated by $(\cH_T)_{T\geq 1}$. Finally, for a pair of MDPs $\cM, \widetilde{\cM}$, we define the log-likelihood ratio of observations at the end of any episode $T$\footnote{With the convention that $p_0(\cdot|s_0,a_0) = \mathds{1}(s_1=\cdot)$ for all $(s_0,a_0)$. Also note that we have simplified the probabilities of choosing actions $\pi^t(a_h^t|s_h^t,a_{h-1}^t,\ldots, s_1^t, \cH_{t-1)}$ and of stopping $\pi^t(\tau_\delta = t|\cH_t)$ as they only depend on the history, therefore having the same value for $\cM$ and $\widetilde{\cM}$.}
\begin{align*}
    L_T(\bP_\cM,\bP_{\widetilde{\cM}}) &:= \ln\frac{d\bP_\cM}{d\bP_{\widetilde{\cM}}}(\cH_T )\\
    &= \ln\bigg( \prod_{t=1}^T\prod_{h=1}^H \frac{\exp\big(-[R_h^t- r_h^\cM(s_h^t,a_h^t)]^2/2\big) p_{h-1}^{\cM}(s_h^t| s_{h-1}^t, a_{h-1}^t)}{\exp\big(-[R_h^t- r_h^{\widetilde{\cM}}(s_h^t,a_h^t)]^2/2\big) p_{h-1}^{\widetilde{\cM}}(s_h^t| s_{h-1}^t, a_{h-1}^t)}\bigg).
\end{align*}
\begin{lemma}\label{lem:likelihood-ratio}
For any pair of MDPs $\cM, \widetilde{\cM} \in \mathfrak{M}_1$, there exists a martingale (under $\bE_{\cM}$) $\big(M_T(\bP_\cM,\bP_{\widetilde{\cM}})\big)_{T\geq 1}$ whose increments are $\frac{H^2}{4}d(\cM,\widetilde{\cM})^2(1+d(\cM,\widetilde{\cM}))^2$-subgaussian and such that the likelihood ratio at the end of episode $T$ satisfies 
\begin{align*}
    L_T(\bP_\cM,\bP_{\widetilde{\cM}}) = M_T(\bP_\cM,\bP_{\widetilde{\cM}})+  \sum_{h,s,a} \bE_\cM[n^T_h(s,a)] \frac{\big(r_h^{\widetilde{\cM}}(s,a) - r_h^{\cM}(s,a)\big)^2}{2}.
\end{align*}
\end{lemma}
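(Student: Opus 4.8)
The plan is to exploit that $\cM$ and $\widetilde\cM$ share the same transition kernel, so that every transition factor $p_{h-1}^\cM(s_h^t\mid s_{h-1}^t,a_{h-1}^t)/p_{h-1}^{\widetilde\cM}(s_h^t\mid s_{h-1}^t,a_{h-1}^t)$ appearing in $L_T(\bP_\cM,\bP_{\widetilde\cM})$ equals $1$ and cancels, leaving only a sum, over the observed triplets, of Gaussian log-density differences. Writing $\mu_{t,h}:=r_h^\cM(s_h^t,a_h^t)$ and $\nu_{t,h}:=r_h^{\widetilde\cM}(s_h^t,a_h^t)$ and completing the square in each term,
\[
  -\tfrac12(R_h^t-\mu_{t,h})^2+\tfrac12(R_h^t-\nu_{t,h})^2=(\mu_{t,h}-\nu_{t,h})(R_h^t-\mu_{t,h})+\tfrac12(\mu_{t,h}-\nu_{t,h})^2 .
\]
Summing over $h\in[H]$ and episodes $t\le T$ yields $L_T=\bar M_T+\sum_{t\le T}\phi_t$, where $\bar M_T:=\sum_{t\le T}\sum_h(\mu_{t,h}-\nu_{t,h})(R_h^t-\mu_{t,h})$ collects the reward-noise terms and $\phi_t:=\sum_h\tfrac12\big(r_h^\cM(s_h^t,a_h^t)-r_h^{\widetilde\cM}(s_h^t,a_h^t)\big)^2$ is the episode-$t$ contribution of the visit-count term, so that $\sum_{t\le T}\phi_t=\sum_{h,s,a}n_h^T(s,a)\tfrac12\big(r_h^\cM(s,a)-r_h^{\widetilde\cM}(s,a)\big)^2$.

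Under $\bP_\cM$ the reward noise $R_h^t-\mu_{t,h}$ is, conditionally on the past and on the visited $(s_h^t,a_h^t)$, a centered unit-variance Gaussian; hence $\bar M_T$ is a martingale for $(\cF_T)_{T\ge1}$ whose $T$-th increment, conditionally on $\cF_{T-1}$, is Gaussian with variance $\sum_h(\mu_{T,h}-\nu_{T,h})^2\le H\,d(\cM,\widetilde\cM)^2$. To reach the \emph{expected}-count form of the statement I would then compensate the visit-count term: with $\psi_t:=\bE_\cM[\phi_t\mid\cF_{t-1}]=\sum_{h,s,a}p_h^{\pi^t}(s,a)\tfrac12\big(r_h^\cM(s,a)-r_h^{\widetilde\cM}(s,a)\big)^2$, which is $\cF_{t-1}$-measurable since $\pi^t$ is, the process $\widehat M_T:=\sum_{t\le T}(\phi_t-\psi_t)$ is again a $(\cF_T)$-martingale, and $L_T=M_T+\sum_{t\le T}\psi_t$ with $M_T:=\bar M_T+\widehat M_T$. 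The residual $\sum_{t\le T}\psi_t=\sum_{h,s,a}\big(\sum_{t\le T}p_h^{\pi^t}(s,a)\big)\tfrac12\big(r_h^\cM(s,a)-r_h^{\widetilde\cM}(s,a)\big)^2$ is exactly the expected visit-count term; crucially $\big(\tfrac1T\sum_{t\le T}p_h^{\pi^t}(s,a)\big)_{h,s,a}\in\Omega$ as a convex combination of state-action distributions, which is what the change-of-measure step (Lemma~\ref{lem:change-of-measure}, inequality (c)) needs. For the subgaussianity, $\phi_t\in[0,\tfrac H2 d(\cM,\widetilde\cM)^2]$, so by Hoeffding's lemma $\phi_t-\psi_t$ is conditionally $O\!\big(H^2d(\cM,\widetilde\cM)^4\big)$-subgaussian; since, given $\cF_{t-1}$ and the trajectory of episode $t$, the reward noise is centered Gaussian while $\phi_t$ is frozen, the two estimates tensorize and the $T$-th increment of $M_T$ is conditionally $\tfrac{H^2}{4}d(\cM,\widetilde\cM)^2\big(1+d(\cM,\widetilde\cM)\big)^2$-subgaussian.

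The step I expect to be the main obstacle is precisely this passage from the actual counts $n_h^T(s,a)$ to their expected version. In the bandit setting of \cite{DegenneK19} one keeps the pull counts $N_a(T)$ themselves, since $\tfrac1T(N_a(T))_a$ always lies in the simplex; in an MDP, by contrast, the empirical occupancy $\tfrac1T(n_h^T(s,a))_{h,s,a}$ need not be a valid occupation measure (the navigation/flow constraints defining $\Omega$ hold only in expectation), so $n_h^T$ cannot serve as the deterministic-looking part of the decomposition. This is what forces the extra compensation, and the two things that must be checked with care are (i) that $\widehat M_T$ is a genuine martingale and (ii) that its increments stay bounded \emph{uniformly in $T$} by $O\!\big(H\,d(\cM,\widetilde\cM)^2\big)$, so that the subgaussian parameter of $M_T$ does not deteriorate with the number of observed episodes; once this is in place, matching the exact constant $\tfrac{H^2}{4}d(\cM,\widetilde\cM)^2(1+d(\cM,\widetilde\cM))^2$ when combining the Gaussian and bounded-difference bounds is routine bookkeeping.
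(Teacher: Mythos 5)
Your proof follows the same strategy as the paper's: cancel the common transition factors, complete the square in the Gaussian log-densities to split $L_T$ into a reward-noise martingale plus a visit-count term, then compensate the visit counts and control the two sources of fluctuation (reward noise, visitation randomness) by combining their subgaussian parameters. The paper executes the compensation differently: it centers directly by the deterministic $\bE_\cM[n_h^T(s,a)]$ (after rewriting $L_T$ via the empirical means $\widehat r_h^T$) and splits the increment into a reward-noise part $X_T$ and a visitation part $Y_T$ involving $\ind(s_h^T=s,a_h^T=a)-\bP_\cM(s_h^T=s,a_h^T=a)$, whereas you use the per-episode Doob compensator $\psi_t=\bE_\cM[\phi_t\mid\cF_{t-1}]$.

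The one genuine problem is your claim that $\sum_{t\le T}\psi_t$ ``is exactly the expected visit-count term.'' It is not: $\sum_{t\le T}p_h^{\pi^t}(s,a)$ is an $\cF_{T-1}$-measurable \emph{random variable} (the $\pi^t$ are chosen adaptively), and it equals $\bE_\cM[n_h^T(s,a)]=\sum_{t\le T}\bE_\cM\big[p_h^{\pi^t}(s,a)\big]$ only in expectation. What you actually prove is
$L_T=M_T+\sum_{h,s,a}\big(\sum_{t\le T}p_h^{\pi^t}(s,a)\big)\tfrac12\big(r_h^{\widetilde\cM}(s,a)-r_h^{\cM}(s,a)\big)^2$
with a genuine martingale $M_T$; the discrepancy $\sum_{t\le T}(\psi_t-\bE[\psi_t])$ is previsible and cannot be absorbed into the martingale, so your identity is not literally the one stated. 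That said, you have put your finger on exactly the right pressure point: your compensator normalizes to an element of $\Omega$ \emph{pointwise} (convexity of $\Omega$), which is all that step (c) of Lemma \ref{lem:change-of-measure} needs, so your variant of the lemma suffices for the downstream lower bound. Note that the paper's own $Y_T$ faces the mirror image of the same issue, since $\bE[Y_T\mid\cF_{T-1}]$ involves $p_h^{\pi^T}(s,a)-\bP_\cM(s_h^T=s,a_h^T=a)$, which vanishes only for non-adaptive policy sequences; your version is arguably the cleaner one, but you should state it as a (sufficient) modification of the lemma rather than assert that the two centerings coincide. The remaining constant-matching (your $Hd(\cM,\widetilde\cM)^2$ conditional variance for the Gaussian part versus the paper's $\tfrac{H^2}{4}d(\cM,\widetilde\cM)^2$ obtained by summing $H$ per-stage subgaussian terms) is indeed bookkeeping and not the substantive issue.
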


\begin{proof}
Using that the MDPs $\cM$ and $\widetilde{\cM}$ share the same transition kernels and have Gausssian reward distributions with unit variance, we can simplify their log-likelihood ratio as follows,
\begin{align}\label{eq:log-likelihood-1}
    L_T(\bP_\cM,\bP_{\widetilde{\cM}}) &= -\frac{1}{2} \sum_{t=1}^T \sum_{h=1}^H \bigg[\big(R_h^t - r_h^{\cM}(s_h^t,a_h^t)\big)^2 - \big(R_h^t - r_h^{\widetilde{\cM}}(s_h^t,a_h^t)\big)^2 \bigg]\nonumber\\
    &= \frac{1}{2} \sum_{h,s,a} \sum_{t=1}^T \mathds{1}(s_h^t = s, a_h^t = a) \bigg[\big(R_h^t - r_h^{\widetilde{\cM}}(s,a)\big)^2 - \big(R_h^t - r_h^{\cM}(s,a)\big)^2\bigg].
\end{align}
Now for any fixed $(h,s,a)$ we can define $\widehat{r}^T_h(s,a) := \frac{\sum_{t=1}^T \mathds{1}(s_h^t = s, a_h^t = a) R_h^t}{n^T_h(s,a)}$ if $n^T_h(s,a) > 0$ and $\widehat{r}^T_h(s,a):=0$ otherwise. Then we can write that 
\begin{align}\label{eq:log-likelihood-2}
  &\sum_{t=1}^T \mathds{1}(s_h^t = s, a_h^t = a)\big(R_h^t - r_h^{\cM}(s_h,a_h)\big)^2\nonumber\\
    &= \sum_{t=1}^T \mathds{1}(s_h^t = s, a_h^t = a) \bigg[\big(R_h^t - \widehat{r}^T_h(s,a)\big) + \big(\widehat{r}^T_h(s,a)- r_h^{\cM}(s,a)\big) \bigg]^2\nonumber\\
  &= \sum_{t=1}^T \mathds{1}(s_h^t = s, a_h^t = a) \bigg[ \big(R_h^t - \widehat{r}^T_h(s,a)\big)^2+ \big(\widehat{r}^T_h(s,a)- r_h^{\cM}(s,a)\big)^2 \bigg] \nonumber\\
  &+ 2 \big(\widehat{r}^T_h(s,a)- r_h^{\cM}(s,a)\big) \underbrace{\sum_{t=1}^T \mathds{1}(s_h^t = s, a_h^t = a) \big(R_h^t - \widehat{r}^T_h(s,a)\big)}_{=0}\nonumber\\
  &= \sum_{t=1}^T \mathds{1}(s_h^t = s, a_h^t = a) \bigg[ \big(R_h^t - \widehat{r}^T_h(s,a)\big)^2+ \big(\widehat{r}^T_h(s,a)- r_h^{\cM}(s,a)\big)^2 \bigg].
\end{align}
Similarly, one can show that
\begin{align}\label{eq:log-likelihood-3}
    &\sum_{h,s,a} \sum_{t=1}^T \mathds{1}(s_h^t = s, a_h^t = a) \big(R_h^t - r_h^{\widetilde{\cM}}(s_h^t,a_h^t)\big)^2 \nonumber\\
    &= \sum_{t=1}^T \mathds{1}(s_h^t = s, a_h^t = a) \bigg[ \big(R_h^t - \widehat{r}^T_h(s,a)\big)^2+ \big(\widehat{r}^T_h(s,a)- r_h^{\widetilde{\cM}}(s,a)\big)^2 \bigg].
\end{align}
Combining equations (\ref{eq:log-likelihood-1}), (\ref{eq:log-likelihood-2}) and (\ref{eq:log-likelihood-3}) we get that
\begin{align}\label{eq:log-likelihood-4}
   L_T(\bP_\cM,\bP_{\widetilde{\cM}}) &= \frac{1}{2} \sum_{h,s,a} \sum_{t=1}^T \mathds{1}(s_h^t = s, a_h^t = a) \bigg[ \big(\widehat{r}^T_h(s,a)- r_h^{\widetilde{\cM}}(s,a)\big)^2-  \big(\widehat{r}^T_h(s,a)- r_h^{\cM}(s,a)\big)^2 \bigg]\nonumber\\
   &= \frac{1}{2} \sum_{h,s,a} n^T_h(s,a) \bigg(r_h^{\cM}(s,a)- r_h^{\widetilde{\cM}}(s,a)\bigg) \bigg(2\widehat{r}^T_h(s,a) -r_h^{\cM}(s,a) - r_h^{\widetilde{\cM}}(s,a)\bigg).
\end{align}
Next we define the sequences
\begin{align*}
    M_T(h,s,a) &:= \frac{1}{2}\bigg[n^T_h(s,a) \big(r_h^{\cM}(s,a)- r_h^{\widetilde{\cM}}(s,a)\big) \big(2\widehat{r}^T_h(s,a) -r_h^{\cM}(s,a) - r_h^{\widetilde{\cM}}(s,a)\big)\\
    &- \bE_\cM[n^T_h(s,a)]\big(r_h^{\widetilde{\cM}}(s,a) - r_h^{\cM}(s,a)\big)^2 \bigg].\\
    M_T(\bP_\cM,\bP_{\widetilde{\cM}}) &:= \sum_{h,s,a} M_T(h,s,a).
\end{align*}
Using (\ref{eq:log-likelihood-4}) one can check that
\begin{align*}
   L_T(\bP_\cM,\bP_{\widetilde{\cM}}) = M_T(\bP_\cM,\bP_{\widetilde{\cM}})+  \sum_{h,s,a} \bE_\cM[n^T_h(s,a)] \frac{\big(r_h^{\widetilde{\cM}}(s,a) - r_h^{\cM}(s,a)\big)^2}{2}. 
\end{align*}
This proves the second statement. Now for the first statement we note that for $T\geq 2$,
\begin{align*}
    &M_T(\bP_\cM,\bP_{\widetilde{\cM}}) - M_{T-1}(\bP_\cM,\bP_{\widetilde{\cM}}) \\ &= \frac{1}{2} \sum_{h,s,a}\bigg(r_h^{{\cM}}(s,a) - r_h^{\widetilde{\cM}}(s,a)\bigg)\mathds{1}(s_h^T = s, a_h^T = a) \bigg(2R_h^T - r_h^{\cM}(s,a) - r_h^{\widetilde{\cM}}(s,a) \bigg)\\
    &\quad\quad -  \bP_{\cM}(s_h^T = s, a_h^T = a)\bigg(r_h^{\cM}(s,a) - r_h^{\cM}(s,a)\bigg)^2 \\
    &= \underbrace{\frac{1}{2}\sum_{h,s,a}\left(r_h^{\cM}(s,a) - r_h^{\widetilde{\cM}}(s,a)\right)\ind\left(s_h^{T}=s,a_h^{T}=a\right)\left(R_h^{T} - r_h^{\cM}(s,a)\right)}_{:= X_T} \\ & \quad+ \underbrace{\frac{1}{2}\sum_{h,s,a}\left(r_h^{\cM}(s,a) - r_h^{\widetilde{\cM}}(s,a)\right)^2\left(\ind\left(s_h^{T}=s,a_h^{T}=a\right) - \bP_{\cM}\left(s_h^{t}=s,a_h^{t}=a\right)\right).}_{:= Y_T}
\end{align*}
$X_T$ satisfies 
\[\bE[X_T|\cF_{T-1}] = \bE\left[\left.\frac{1}{2}\sum_{h,s,a} \left(r_h^{\cM}(s,a) - r_h^{\widetilde{\cM}}(s,a)\right) \underbrace{\bE\left[\left(R_h^{T} - r_h^{\cM}(s,a)\right) | S_h^{T},A_h^{T}\right]}_{= 0} \right|\cF_{T-1}\right] = 0\]
and $X_T = \sum_{h=1}^{H}X_{T,h}$ where \[X_{T,h} = \frac{\left(r_h^{\cM}(s_h^{T},a_h^{T}) - r_h^{\widetilde{\cM}}(s_h^{T},a_h^{T})\right)}{2}\left(R_h^{T} - r_h^{\cM}(s_h^{T},a_h^{T})\right)\] is subgaussian with variance $\frac{d(\cM,\widetilde{\cM})^2}{4}$  conditionally to $\cF_{T-1}$ (using that $R_h^{T} - r_h^{\cM}(s_h^{T},a_h^{T})$ is $1$-subgaussian). Therefore, by Lemma~\ref{lem:sum-of-subgaussian} stated below, $X_T$ is subgaussian with $\sigma_X^2 = \frac{H^2d(\cM,\widetilde{\cM})^2}{4}$.

$Y_T$ satisfies 
\[\bE[Y_T|\cF_{T-1}] = \frac{1}{2}\sum_{h,s,a} \left(r_h^{\cM}(s,a) - r_h^{\widetilde{\cM}}(s,a)\right)^2 \bE\left[\ind(s_h^{T} = s, a_h^{T} = a) - \bP^{\cM}(s_h^{T} = s, a_h^{T} = a) |\cF_{T-1}\right] = 0\]
and $|Y_T| \leq \frac{Hd(\cM,\widetilde{\cM})^2}{2}$. Therefore $Y_T$ is subgaussian with $\sigma_Y^{2} = \frac{H^2d(\cM,\widetilde{\cM})^4}{4}$. 

By Lemma~\ref{lem:sum-of-subgaussian}, $M_T(\bP_\cM,\bP_{\widetilde{\cM}}) - M_{T-1}(\bP_\cM,\bP_{\widetilde{\cM}})$ is conditionally subgaussian with variance \[\frac{H^2d(\cM,\widetilde{\cM})^2(1+d(\cM,\widetilde{\cM}))^2}{4}.\]

\end{proof}

\begin{lemma}[sum of subgaussian random variables, e.g. \cite{Buldygin80SG}]\label{lem:sum-of-subgaussian}
Let $X$ an $Y$ be two random variables that are $\sigma_X^2$ and $\sigma_Y^2$ subgaussian respectively. Then  $X+Y$ is $(\sigma_X+\sigma_Y)^2$-subgaussian.
\end{lemma}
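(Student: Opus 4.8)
The plan is to control the moment generating function of $X+Y$ by Hölder's inequality, which crucially does \emph{not} require $X$ and $Y$ to be independent. Write $\bar X := X - \bE[X]$ and $\bar Y := Y - \bE[Y]$, so that for any $\lambda \in \bR$ one has $\bE\big[e^{\lambda(X+Y - \bE[X+Y])}\big] = \bE\big[e^{\lambda \bar X}\, e^{\lambda \bar Y}\big]$, and it suffices to bound this quantity by $e^{(\sigma_X+\sigma_Y)^2\lambda^2/2}$.

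Assume first that $\sigma_X, \sigma_Y > 0$ and choose the conjugate exponents $p := (\sigma_X+\sigma_Y)/\sigma_X$ and $q := (\sigma_X+\sigma_Y)/\sigma_Y$, which satisfy $1/p + 1/q = 1$. Hölder's inequality then gives $\bE\big[e^{\lambda\bar X} e^{\lambda\bar Y}\big] \le \bE\big[e^{p\lambda\bar X}\big]^{1/p}\,\bE\big[e^{q\lambda\bar Y}\big]^{1/q}$, and applying the subgaussianity of $X$ at $p\lambda$ and of $Y$ at $q\lambda$ bounds the right-hand side by $e^{\sigma_X^2 p \lambda^2/2}\, e^{\sigma_Y^2 q\lambda^2/2}$. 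The exponents were picked precisely so that $\sigma_X^2 p + \sigma_Y^2 q = \sigma_X(\sigma_X+\sigma_Y) + \sigma_Y(\sigma_X+\sigma_Y) = (\sigma_X+\sigma_Y)^2$, which yields the claim.

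The only remaining point is the degenerate case where, say, $\sigma_X = 0$: the Chernoff bound $\bP(\bar X > t) \le e^{-\lambda t}\bE[e^{\lambda \bar X}] \le e^{-\lambda t}$ for all $\lambda>0$ (and symmetrically for $-\bar X$) forces $\bar X = 0$ almost surely, so $X+Y$ is an almost-sure translate of $Y$ and is therefore $\sigma_Y^2 = (\sigma_X+\sigma_Y)^2$-subgaussian. I do not expect any genuine obstacle here: the whole argument is a one-line Hölder estimate with the exponents tuned so that $\sigma_X^2 p + \sigma_Y^2 q$ collapses to $(\sigma_X+\sigma_Y)^2$, and the only mild care needed is to dispatch the $\sigma_X\sigma_Y = 0$ corner (and, if one wants the conditional version invoked inside Lemma~\ref{lem:likelihood-ratio}, to note that the identical computation goes through verbatim with all expectations replaced by conditional expectations given $\cF_{T-1}$).
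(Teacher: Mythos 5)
Your proof is correct and follows essentially the same route as the paper's: Hölder's inequality applied to the product of the centered moment generating functions, with the conjugate exponents $p=(\sigma_X+\sigma_Y)/\sigma_X$ and $q=(\sigma_X+\sigma_Y)/\sigma_Y$ chosen so that $p\sigma_X^2+q\sigma_Y^2=(\sigma_X+\sigma_Y)^2$. The only differences are cosmetic: you handle the degenerate case $\sigma_X\sigma_Y=0$ and make the centering explicit, both of which the paper omits.
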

\begin{proof}
Using Hölder inequality and the definition of subgaussian variables, we can write, for any $p\geq1, q\geq 1$ such that $\frac{1}{p}+\frac{1}{q}=1$
\begin{align*}
    \bE[\exp\big(t(X+Y)\big)] &= \bE[\exp(tX) \exp(tY)]\\
    &\leq \bE[\exp(ptX)]^{1/p} \bE[\exp(qtY)]^{1/q}\\
    &{\leq} \exp\left(\frac{p^2t^2\sigma_X^2}{2}\right)^{1/p} \exp\left(\frac{q^2t^2\sigma_Y^2}{2}\right)^{1/q}\\
    &= \exp\left(\frac{t^2(p\sigma_X^2 + q\sigma_Y^2)}{2}\right).
\end{align*}
The conclusion follows by choosing $p = \frac{\sigma_X+\sigma_Y}{\sigma_X}$ and $q = \frac{\sigma_X+\sigma_Y}{\sigma_Y}$ for which $p\sigma_X^2 + q\sigma_Y^2 = (\sigma_X + \sigma_Y)^2$.

\end{proof}

\subsection{Simplifying the expression of the characteristic time}
\begin{lemma}\label{lem:charateristic-time-simplified}
For any $\cM \in \mathfrak{M}_1$ and $\pi^\epsilon \in \Pi^\epsilon$ we have
\begin{align*}
    T(\cM, \pi^\epsilon, \epsilon) = 2 \inf_{\rho\in \Omega} \max_{\pi \in \Pi^D}  \sum_{s,a,h} \frac{\big(p_h^\pi(s,a) - p_h^{\pi^\epsilon}(s,a)\big)^2}{\rho_h(s,a)(\Delta(\pi) -\Delta(\pi^{\epsilon})+\epsilon)^2}.
\end{align*}
\end{lemma}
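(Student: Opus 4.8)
The plan is to carry out the inner $\inf_{\widetilde\cM}$ explicitly. Since every $\widetilde\cM\in\alt{\pi^\epsilon}$ shares the transition kernels of $\cM$, it is determined by its reward mean vector $\widetilde r:=(r_h^{\widetilde\cM}(s,a))_{h,s,a}\in\bR^{SAH}$ (which stays in $\mathfrak{M}_1$, as reward means are unconstrained there), the visitation distributions $p_h^\pi$ are common to $\cM$ and $\widetilde\cM$, and $V_1^{\widetilde\cM,\pi}=\sum_{h,s,a}p_h^\pi(s,a)\widetilde r_h(s,a)$ for every $\pi\in\PiD$. Hence the condition ``$\exists\pi\in\PiD:V_1^{\widetilde\cM,\pi^\epsilon}<V_1^{\widetilde\cM,\pi}-\epsilon$'' is a disjunction of $|\PiD|$ linear inequalities in $\widetilde r$, so $\alt{\pi^\epsilon}=\bigcup_{\pi\in\PiD}\mathcal{A}_\pi$, where $\mathcal{A}_\pi$ is the set of $\widetilde\cM\in\mathfrak{M}_1$ with the transitions of $\cM$ and $\sum_{h,s,a}(p_h^\pi(s,a)-p_h^{\pi^\epsilon}(s,a))\widetilde r_h(s,a)>\epsilon$. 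Since $\PiD$ is finite, for every $\rho\in\Omega$,
\begin{align*}
  \inf_{\widetilde\cM\in\alt{\pi^\epsilon}}\sum_{h,s,a}\rho_h(s,a)\frac{(r_h^{\widetilde\cM}(s,a)-r_h(s,a))^2}{2}=\min_{\pi\in\PiD}\ \inf_{\widetilde\cM\in\mathcal{A}_\pi}\sum_{h,s,a}\rho_h(s,a)\frac{(r_h^{\widetilde\cM}(s,a)-r_h(s,a))^2}{2}.
\end{align*}

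Next I would solve the inner infimum for fixed $\pi,\rho$. Substituting $x_h(s,a):=\widetilde r_h(s,a)-r_h(s,a)$ and writing $c_h^\pi(s,a):=p_h^\pi(s,a)-p_h^{\pi^\epsilon}(s,a)$, and using $\sum_{h,s,a}c_h^\pi(s,a)r_h(s,a)=V_1^\pi-V_1^{\pi^\epsilon}=\Delta(\pi^\epsilon)-\Delta(\pi)$, the constraint becomes $\sum_{h,s,a}c_h^\pi(s,a)x_h(s,a)>m_\pi$ with $m_\pi:=\Delta(\pi)-\Delta(\pi^\epsilon)+\epsilon$; note $m_\pi\geq0$ because $\pi^\epsilon\in\Pi^\epsilon$ gives $V_1^{\pi^\epsilon}\geq V_1^\star-\epsilon\geq V_1^\pi-\epsilon$. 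The objective $x\mapsto\frac12\sum_{h,s,a}\rho_h(s,a)x_h(s,a)^2$ being continuous, its infimum over this open half-space equals the infimum over the closed half-space $\{x:\sum_{h,s,a}c_h^\pi(s,a)x_h(s,a)\geq m_\pi\}$, and a Cauchy--Schwarz (equivalently KKT) computation, minimized at $x_h(s,a)\propto c_h^\pi(s,a)/\rho_h(s,a)$, gives
\begin{align*}
  \inf_{\widetilde\cM\in\mathcal{A}_\pi}\sum_{h,s,a}\rho_h(s,a)\frac{(r_h^{\widetilde\cM}(s,a)-r_h(s,a))^2}{2}=\frac{m_\pi^2}{2\sum_{h,s,a}(p_h^\pi(s,a)-p_h^{\pi^\epsilon}(s,a))^2/\rho_h(s,a)},
\end{align*}
with the conventions $0/0=0$ and $t/0=+\infty$ for $t>0$.

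Finally I would recombine. Plugging the second display into the first,
\begin{align*}
  T(\cM,\pi^\epsilon,\epsilon)^{-1}=\sup_{\rho\in\Omega}\min_{\pi\in\PiD}\frac{m_\pi^2}{2\sum_{h,s,a}\frac{(p_h^\pi(s,a)-p_h^{\pi^\epsilon}(s,a))^2}{\rho_h(s,a)}}=\sup_{\rho\in\Omega}\frac12\Bigg(\max_{\pi\in\PiD}\sum_{h,s,a}\frac{(p_h^\pi(s,a)-p_h^{\pi^\epsilon}(s,a))^2}{\rho_h(s,a)\,m_\pi^2}\Bigg)^{-1},
\end{align*}
where the reciprocal of a $\min$ of nonnegative quantities is a $\max$; taking reciprocals of both ends and substituting $m_\pi=\Delta(\pi)-\Delta(\pi^\epsilon)+\epsilon$ yields exactly the claimed identity.

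The work is not conceptual but bookkeeping: the main care goes into reducing $\alt{\pi^\epsilon}$ to a finite union of half-spaces, and above all into tracking the degenerate cases of the quadratic program so that the zero/infinity conventions on the right-hand side match the true value of the infimum --- namely pairs $(h,s,a)$ with $\rho_h(s,a)=0$, policies $\pi$ with $p_h^\pi\equiv p_h^{\pi^\epsilon}$ (giving $\mathcal{A}_\pi=\emptyset$, hence value $+\infty$, which is harmlessly dropped by the outer $\min$ and contributes $0$ to the $\max$), and policies with $m_\pi=0$ but $c^\pi\neq0$ (where both sides equal $+\infty$, reflecting that $\pi^\epsilon$ cannot be certified $\epsilon$-optimal when it is exactly $\epsilon$-suboptimal). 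Once these are handled, the only remaining step is the elementary reciprocal-swap of the last display.
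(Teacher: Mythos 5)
Your proposal is correct and follows essentially the same route as the paper's proof: decompose $\alt{\pi^\epsilon}$ into a finite union of half-spaces indexed by $\pi\in\PiD$, rewrite the alternative condition as a linear constraint on the reward perturbation with right-hand side $\Delta(\pi)-\Delta(\pi^\epsilon)+\epsilon$, solve the resulting quadratic program in closed form, and swap the reciprocal of the inner minimum into a maximum. Your explicit bookkeeping of the degenerate cases (empty $\mathcal{A}_\pi$, vanishing $\rho_h(s,a)$, $m_\pi=0$) is a welcome refinement the paper glosses over, but it does not change the argument.
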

\begin{proof}
Let us first solve the inner minimization program in the definition of $T(\cM, \pi^\epsilon, \epsilon)^{-1}$. Using the definition of $\alt{\pi^\epsilon}$, we have that {\small
\begin{align}\label{eq:inf_problem}
    \inf_{\widetilde{\cM}\in \alt{\pi^\epsilon}} \sum_{h,s,a} \rho_h(s,a) \tfrac{\big(r_h^{\widetilde{\cM}}(s,a) - r_h^{\cM}(s,a)\big)^2}{2}  = \min_{\pi\in\PiD} \inf_{\widetilde{\cM}: V_1^{\widetilde{\cM}, \pi^{\epsilon}}\!\! < V_1^{\widetilde{\cM}, \pi}\!-\epsilon} \ \sum_{h,s,a} \rho_h(s,a) \tfrac{\big(r_h^{\widetilde{\cM}}(s,a) - r_h^{\cM}(s,a)\big)^2}{2}\;.
\end{align}}
Now observe that we can rewrite $V_1^{\widetilde{\cM}, \pi^{\epsilon}} < V_1^{\widetilde{\cM}, \pi}-\epsilon$ as linear constraint in the rewards of $\widetilde{\cM}$:
\begin{align}
    &\sum_{h,s,a} (p_h^\pi(s,a) - p_h^{\pi^\epsilon}(s,a)) r_h^{\widetilde{\cM}}(s,a) > \epsilon, \nonumber\\
    &\Longleftrightarrow \sum_{h,s,a} (p_h^\pi(s,a) - p_h^{\pi^\epsilon}(s,a)) \big(r_h^{\widetilde{\cM}}(s,a) - r_h^{\cM}(s,a)\big)> V_1^{\pi^{\epsilon}} - V_1^{\pi} +\epsilon, \nonumber\\
    & \Longleftrightarrow \sum_{h,s,a} (p_h^\pi(s,a) - p_h^{\pi^\epsilon}(s,a)) \big(r_h^{\widetilde{\cM}}(s,a) - r_h^{\cM}(s,a)\big)> \Delta(\pi) -\Delta(\pi^{\epsilon})+\epsilon\nonumber
\end{align}
Therefore, letting $u_h(s,a) = r_h^{\widetilde{\cM}}(s,a) - r_h^{\cM}(s,a)$, the program in (\ref{eq:inf_problem}) is equivalent to
\begin{align}
    \min_{\pi\in \PiD} \inf_{\substack{u \textrm { s.t}:\\ \\  
    \sum\limits_{h,s,a} (p_h^\pi(s,a) - p_h^{\pi^\epsilon}(s,a)) u_h(s,a) >  \Delta(\pi) -\Delta(\pi^{\epsilon})+\epsilon}} \sum_{h,s,a} \rho_h(s,a) \frac{u_h(s,a)^2}{2}.
\end{align}
Solving the KKT conditions of the previous program, we get that
\[
    \inf_{\substack{u \textrm { s.t}:\\ \\  
    \sum\limits_{h,s,a} (p_h^\pi(s,a) - p_h^{\pi^\epsilon}(s,a)) u_h(s,a) > \Delta(\pi) -\Delta(\pi^{\epsilon})+\epsilon}} \sum_{h,s,a} \rho_h(s,a) \frac{u_h(s,a)^2}{2} = \bigg(\sum_{h,s,a} \frac{(p_h^\pi(s,a) - p_h^{\pi^\epsilon}(s,a))^2}{\rho_h(s,a) (\Delta(\pi) -\Delta(\pi^{\epsilon})+\epsilon)^2}\bigg)^{-1}.
\]
Summing up all the inequalities, we conclude that
\begin{align*}
    T(\cM, \pi^\epsilon, \epsilon)^{-1} = \frac{1}{2} \sup_{\rho\in\Omega} \min_{\pi\in \PiD} \bigg(\sum_{h,s,a} \frac{(p_h^\pi(s,a) - p_h^{\pi^\epsilon}(s,a))^2}{\rho_h(s,a) (\Delta(\pi) -\Delta(\pi^{\epsilon})+\epsilon)^2}\bigg)^{-1}.
\end{align*}
\end{proof}

\section{Concentration results}\label{app:concentration}

We report here useful concentration results from previous literature.
\begin{proposition}(\textsc{Lemma 26, \cite{almarjani23covgame}}\footnote{Note that, while \cite{almarjani23covgame} state this lemma for rewards bounded in $[0,1]$, they actually prove it for any 1-subgaussian distribution. Indeed, their proof simply combines the concentration result of \cite{abbasi2011improved}, which holds for any subgaussian distribution, with a trick from \cite{reda2021dealing}.})\label{prop:conc-r-term}
    Let the reward distribution $\nu_h(s,a)$ be 1-subgaussian with mean $r_h(s,a)$ for all $(h,s,a)$, and let $\widehat{r}_h^t(s,a)$ be the MLE of $r_h(s,a)$ using samples gathered until episode $t$. Let $\cZ \subseteq [H] \times \cS \times \cA$ and $Z := |\cZ|$. With probability at least $1-\delta$, for any $t \geq t_0 := \inf \{t : n_h^{t}(s,a) \geq 1, \forall (h,s,a)\in\cZ\}$,
    \begin{align*}
        \sum_{(h,s,a)\in\cZ} n_h^t(s,a) \big(\widehat{r}_h^t(s,a) - r_h(s,a)\big)^2 \leq 4\log(1/\delta) + 2Z \log(1+t).
    \end{align*}
\end{proposition}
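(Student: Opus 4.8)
The plan is to rewrite the left-hand side as the squared self-normalized norm of a vector-valued martingale whose Gram matrix is diagonal, apply a standard self-normalized concentration inequality with unit regularization, and then trade the regularizer against the empirical counts using the fact that, for $t \ge t_0$, every pair of $\cZ$ has been visited at least once.

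First I would set up the martingale. Index the reward observations by pairs $(u,h)$ with $1 \le u \le t$ and $h \in [H]$, ordered lexicographically, and let $\mathcal{G}_{u,h}$ denote the $\sigma$-algebra generated by everything observed before $R_h^u$ is revealed (all of episodes $1,\dots,u-1$ together with $s_1^u, a_1^u, R_1^u,\dots, s_h^u, a_h^u$). For each $(h,s,a) \in \cZ$ introduce the standard basis vector $e_{(h,s,a)} \in \bR^{Z}$, set the feature $x_{u,h} := \mathds{1}(s_h^u = s, a_h^u = a)\, e_{(h,s,a)}$ when $(h, s_h^u, a_h^u) \in \cZ$ and $x_{u,h} := 0$ otherwise (so $x_{u,h}$ is $\mathcal{G}_{u,h}$-measurable), and let $\eta_{u,h} := R_h^u - r_h(s_h^u, a_h^u)$, which is $1$-subgaussian conditionally on $\mathcal{G}_{u,h}$ because rewards are drawn independently of the transitions. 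Writing $V_t := \sum_{u \le t}\sum_h x_{u,h} x_{u,h}^\top = \mathrm{diag}\big(n_h^t(s,a)\big)_{(h,s,a)\in\cZ}$ and $S_t := \sum_{u \le t}\sum_h \eta_{u,h}\, x_{u,h}$, the $(h,s,a)$-coordinate of $S_t$ is exactly $n_h^t(s,a)\big(\widehat{r}_h^t(s,a) - r_h(s,a)\big)$, since the MLE here is the empirical mean of the observed rewards.

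Next I would invoke the self-normalized bound of \cite{abbasi2011improved} — equivalently the method of mixtures with a $\cN(0, I_Z)$ prior on the parameter — applied with regularization $\lambda = 1$: with probability at least $1-\delta$, simultaneously for all $t \ge 1$,
\[
  \|S_t\|_{(I_Z + V_t)^{-1}}^2 \;\le\; 2\log\!\frac{\det(I_Z + V_t)^{1/2}}{\delta} \;=\; 2\log(1/\delta) + \log\det(I_Z + V_t).
\]
Since $V_t$ is diagonal with entries at most $t$, $\det(I_Z + V_t) = \prod_{(h,s,a)\in\cZ}\big(1 + n_h^t(s,a)\big) \le (1+t)^{Z}$, so the right-hand side is at most $2\log(1/\delta) + Z\log(1+t)$. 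On the event $t \ge t_0$, every count $n_h^t(s,a)$ with $(h,s,a)\in\cZ$ is at least $1$, hence $\frac{n_h^t(s,a)^2}{1 + n_h^t(s,a)} \ge \frac12 n_h^t(s,a)$ and therefore
\[
  \|S_t\|_{(I_Z + V_t)^{-1}}^2 = \sum_{(h,s,a)\in\cZ} \frac{n_h^t(s,a)^2 \big(\widehat{r}_h^t(s,a) - r_h(s,a)\big)^2}{1 + n_h^t(s,a)} \;\ge\; \frac12 \sum_{(h,s,a)\in\cZ} n_h^t(s,a)\big(\widehat{r}_h^t(s,a) - r_h(s,a)\big)^2 .
\]
Chaining the last two displays and multiplying by $2$ gives the stated inequality.

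The self-normalized inequality is the standard workhorse and I would simply cite it; the load-bearing observation is the last one — the "trick" of \cite{reda2021dealing} — which lets us replace the regularized Gram matrix $I_Z + V_t$ (needed to keep $\det(I_Z + V_t)$ finite and to run the mixture argument) by the bare counts $n_h^t(s,a)$ at the cost of a factor $2$, and which is what makes the restriction to $t \ge t_0$ natural, since there every $\widehat{r}_h^t(s,a)$, $(h,s,a)\in\cZ$, is well defined and every count is at least $1$. The one place requiring care is the filtration bookkeeping within an episode: one must verify that each feature $x_{u,h}$ is $\mathcal{G}_{u,h}$-measurable and that $\eta_{u,h}\mid\mathcal{G}_{u,h}$ is $1$-subgaussian, so that the vector-valued self-normalized bound applies with the martingale index running over all pairs $(u,h)$ rather than over episodes.
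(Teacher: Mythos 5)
Your proposal is correct and follows exactly the route the paper indicates for this result: the paper does not reprove it but cites Lemma 26 of \cite{almarjani23covgame}, whose proof (as the footnote explains) combines the self-normalized concentration inequality of \cite{abbasi2011improved} with the trick of \cite{reda2021dealing} for trading the unit regularizer against the counts once $n_h^t(s,a)\geq 1$ — precisely your two steps, and your constants ($4\log(1/\delta)+2Z\log(1+t)$ after the factor-$2$ loss from $n^2/(1+n)\geq n/2$) match the statement exactly.
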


\begin{proposition}(\textsc{Lemma 30, \cite{almarjani23covgame}})\label{prop:solve-program}
    Let $n\in\mathbb{N}$, $q,b\in\bR^n$ with $b$ having strictly positive entries, and $c \in \bR_{\geq 0}$. Then,
    \begin{align*}
        \sup_{\substack{x \in \bR^{n} :\\
        \sum_{i=1}^n b_i x_i^2 \leq c}} \sum_{i=1}^n q_i x_i = \sqrt{c\sum_{i=1}^n \frac{q_i^2}{b_i}} .
    \end{align*}
\end{proposition}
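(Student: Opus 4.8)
The plan is to recognize this as the maximization of a linear functional over a weighted Euclidean ball and to establish the identity by matching an upper bound (Cauchy--Schwarz) with an explicit maximizer. Throughout I use that $b$ has strictly positive entries, so that dividing by $b_i$ and by $\sqrt{b_i}$ is legitimate.

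\textbf{Upper bound.} For any feasible $x$, i.e.\ any $x\in\bR^n$ with $\sum_{i=1}^n b_i x_i^2 \leq c$, I would write $q_i x_i = \big(q_i/\sqrt{b_i}\big)\big(\sqrt{b_i}\,x_i\big)$ and apply Cauchy--Schwarz to the two vectors $\big(q_i/\sqrt{b_i}\big)_i$ and $\big(\sqrt{b_i}\,x_i\big)_i$:
\begin{align*}
  \sum_{i=1}^n q_i x_i \;\leq\; \sqrt{\sum_{i=1}^n \frac{q_i^2}{b_i}}\;\sqrt{\sum_{i=1}^n b_i x_i^2}\;\leq\; \sqrt{c\sum_{i=1}^n \frac{q_i^2}{b_i}}.
\end{align*}
Taking the supremum over feasible $x$ gives the ``$\leq$'' direction.

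\textbf{Attainment.} If $q=0$ or $c=0$ the claim is immediate: in the first case both sides are $0$, and in the second feasibility forces $x=0$ (as all $b_i>0$), so again both sides vanish. Otherwise $\sum_i q_i^2/b_i > 0$, and I set $\lambda := \sqrt{c\,/\sum_{i=1}^n (q_i^2/b_i)}$ together with $x_i^\star := \lambda\, q_i/b_i$. Then $\sum_i b_i (x_i^\star)^2 = \lambda^2 \sum_i q_i^2/b_i = c$, so $x^\star$ is feasible, while $\sum_i q_i x_i^\star = \lambda \sum_i q_i^2/b_i = \sqrt{c\sum_i q_i^2/b_i}$. Hence the supremum is at least this value, and combined with the upper bound we get equality.

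The argument is essentially a one-line Cauchy--Schwarz computation, so there is no real obstacle; the only thing to be careful about is handling the degenerate cases $c=0$ and $q\equiv 0$ separately, since the candidate maximizer $x^\star$ is only well defined when $\sum_i q_i^2/b_i>0$. As an alternative route, one could instead derive the formula from the KKT conditions of the quadratically-constrained quadratic program, which is the style of computation already used in the proof of Lemma~\ref{lem:charateristic-time-simplified}; I would prefer the Cauchy--Schwarz proof here since it avoids invoking constraint qualifications.
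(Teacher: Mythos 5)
Your proof is correct and complete: the Cauchy--Schwarz upper bound together with the explicit maximizer $x_i^\star = \lambda q_i/b_i$ is the standard argument for this weighted-ball linear maximization, and you correctly isolate the degenerate cases $q=0$ and $c=0$ where the maximizer formula is undefined. The paper itself does not reprove this statement --- it imports it verbatim as Lemma~30 of \cite{almarjani23covgame} --- so there is no in-paper proof to compare against; your argument is exactly the one that standard treatments (and the cited reference) use.
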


\subsection{Proof of Lemma \ref{lem:global-concentration-comparison}}

\begin{proof}
Fix any pair of policies $\pi, \pi'$. We write
\begin{align*}
    (\widehat{V}_1^{\pi,t}-\widehat{V}_1^{\pi',t}) - (V_1^\pi - V_1^{\pi'}) &= (p^\pi - p^{\pi'})^\top (\widehat{r}^t - r)\\
    &= \sum_{h,s,a} (p^\pi_h(s,a) - p^{\pi'}_h(s,a)) (\widehat{r}^t_h(s,a) - r_h(s,a)) \\
    &= \sum_{h,s,a} \mathds{1}\big(a\in \{\pi_h(s), \pi_h'(s)\}\big)(p^\pi_h(s,a) - p^{\pi'}_h(s,a)) (\widehat{r}^t_h(s,a) - r_h(s,a)),  
\end{align*}
where we used vector notation $p^\pi = [p_h^\pi(s,a)]_{h,s,a}$. Now applying Proposition \ref{prop:conc-r-term} with $\delta' = \delta/(A^{2SH})$ and the set $\cZ = \big\{(h, s, a)\ |\ (h,s)\in [H]\times\cS, a\in \{\pi_h(s), \pi_h'(s)\}\ \mathrm{s.t. } \sup_\pi p_h^\pi(s) > 0  \big\}$ we get that with probability at least $1- \delta'$,
\begin{align*}
    \forall t \geq t_0,\ \sum_{h,s, a}  \mathds{1}\big(a\in \{\pi_h(s), \pi_h'(s)\}\big) n_h^t(s, a) \big(\widehat{r}_h^t(s,a) - r_h(s,a)\big)^2  &\leq 4\log(1/\delta') + 4 SH\log(A(1+t))\\
    &:= \widetilde{\beta}(t,\delta'),
\end{align*}
where we used that $|\cZ| \leq 2SH$. Next, for each pair of policies $(\pi,\pi')$ we use Proposition \ref{prop:solve-program} with $q = p^\pi - p^{\pi'}$ which yields that
\begin{align*}
    \big| (\widehat{V}_1^{\pi,t}-\widehat{V}_1^{\pi',t}) - (V_1^\pi - V_1^{\pi'}) \big| &\leq \sqrt{\widetilde{\beta}(t,\delta') \sum_{h,s,a}\mathds{1}\big(a\in \{\pi_h(s), \pi_h'(s)\}\big) \frac{\big(p_h^\pi(s,a)-p_h^{\pi'}(s,a)\big)^2}{n_h^t(s,a)}}\\
    &= \sqrt{\widetilde{\beta}(t,\delta') \sum_{h,s,a} \frac{\big(p_h^\pi(s,a)-p_h^{\pi'}(s,a)\big)^2}{n_h^t(s,a)}},  
\end{align*}
with probability at least $1- \delta/(A^{2SH})$. We conclude the proof with a union bound over pairs of policies $(\pi, \pi') \in \PiD\times \PiD$ and remarking that 
\[\widetilde{\beta}(t,\delta') =  4 \log(1/\delta) + 12SH\log(A) + 4SH\log(1+t) \leq \beta(t,\delta).\]
\end{proof}

\section{PEDEL} \label{app:PEDEL}

\subsection{Proof of Proposition~\ref{prop:PEDEL-lb}}

First, let us introduce the intermediate complexity measure
\begin{align*}
C(\cM,\epsilon) :=  \min_{\rho \in \Omega} \max_{\pi\in\PiD} \sum_{s,a,h} \frac{p_h^\pi(s,a)^2}{\rho_h(s,a)(\Delta(\pi) \vee \epsilon \vee\Delta_{\min})^2}.
\end{align*}
We start by showing that $H^3 C(\cM,\epsilon) \leq \cC_{\mathrm{PEDEL}}(\cM,\epsilon) \leq H^5 C(\cM,\epsilon)$. For $h\in [H]$ consider any $\rho^{\star,h} \in \argmin_{\rho \in \Omega} \max_{\pi\in\PiD} \sum_{s,a} \frac{p_h^\pi(s,a)^2}{\rho_h(s,a)(\Delta(\pi) \vee \epsilon \vee\Delta_{\min})^2}$. Now, letting $\widetilde{\rho} := \frac{1}{H}\sum_{h=1}^H \rho^{\star,h}$, we see that since $\Omega$ is a convex set, $\widetilde{\rho} \in \Omega$. Furthermore, 
\begin{align*}
C(\cM,\epsilon) &=  \min_{\rho \in \Omega} \max_{\pi\in\PiD} \sum_{s,a,h} \frac{p_h^\pi(s,a)^2}{\rho_h(s,a)(\Delta(\pi) \vee \epsilon \vee\Delta_{\min})^2}\\
&\leq  \max_{\pi\in\PiD} \sum_{s,a,h} \frac{p_h^\pi(s,a)^2}{\widetilde{\rho}_h(s,a)(\Delta(\pi) \vee \epsilon \vee\Delta_{\min})^2}\\
&\stackrel{(a)}{\leq} \sum_{h=1}^H \max_{\pi\in\PiD} \sum_{s,a} \frac{p_h^\pi(s,a)^2}{\widetilde{\rho}_h(s,a)(\Delta(\pi) \vee \epsilon \vee\Delta_{\min})^2}\\
&\stackrel{(b)}{\leq} H \sum_{h=1}^H \max_{\pi\in\PiD} \sum_{s,a} \frac{p_h^\pi(s,a)^2}{\rho_h^{\star,h}(s,a)(\Delta(\pi) \vee \epsilon \vee\Delta_{\min})^2}\\
&= H \sum_{h=1}^H   \min_{\rho \in \Omega} \max_{\pi\in\PiD} \sum_{s,a} \frac{p_h^\pi(s,a)^2}{\rho_h(s,a)(\Delta(\pi) \vee \epsilon \vee\Delta_{\min})^2}\\
&= H^{-3}  \cC_{\mathrm{PEDEL}}(\cM,\epsilon),
\end{align*}
where (a) uses the fact that $\max_{\pi} \sum_{h} f(\pi, h) \leq \sum_{h}  \max_{\pi} f(\pi, h)$ and (b) uses the crude bound $\widetilde{\rho}_h(s,a) \geq \rho^{\star,h}_h(s,a)/H$. On the other hand we have
\begin{align*}
\cC_{\mathrm{PEDEL}}(\cM,\epsilon) &= H^4 \sum_{h=1}^H \min_{\rho \in \Omega} \max_{\pi\in\PiD} \sum_{s,a} \frac{p_h^\pi(s,a)^2}{\rho_h(s,a)(\Delta(\pi) \vee \epsilon \vee\Delta_{\min})^2}\\
&\leq H^4 \sum_{\ell=1}^H \min_{\rho \in \Omega} \max_{\pi\in\PiD} \sum_{s,a,h} \frac{p_h^\pi(s,a)^2}{\rho_h(s,a)(\Delta(\pi) \vee \epsilon \vee\Delta_{\min})^2}\\
&= H^5 C(\cM,\epsilon).
\end{align*}
Therefore, we just proved that
\begin{align}\label{ineq:PEDEL-intermediate}
H^3 C(\cM,\epsilon) \leq \cC_{\mathrm{PEDEL}}(\cM,\epsilon) \leq H^5 C(\cM,\epsilon).
\end{align}
Now we compare $C(\cM,\epsilon)$ and $\cC_{\mathrm{LB}}(\cM,\epsilon)$. Using that $a^2 \leq 2(a-b)^2 + 2b^2$, we note that for any $\rho\in\Omega$ and any $\pi^\epsilon \in \Pi^\epsilon$,
\begin{align}\label{ineq:complexity-decomposition-1}
\max_{\pi\in\PiD} & \frac{\sum_{s,a,h} \frac{p_h^\pi(s,a)^2}{\rho_h(s,a)}}{(\Delta(\pi) \vee \epsilon \vee\Delta_{\min})^2}\nonumber
\\ &\leq  \max_{\pi\in\PiD} \bigg[\sum_{s,a,h} \frac{2(p_h^\pi(s,a) - p_h^{\pi^\epsilon}(s,a))^2}{\rho_h(s,a)(\Delta(\pi) \vee \epsilon \vee\Delta_{\min})^2} + \sum_{s,a,h} \frac{2p_h^{\pi^\epsilon}(s,a)^2}{\rho_h(s,a)(\Delta(\pi) \vee \epsilon \vee\Delta_{\min})^2} \bigg]\nonumber\\
&\leq \max_{\pi\in\PiD} \sum_{s,a,h} \frac{2(p_h^\pi(s,a) - p_h^{\pi^\epsilon}(s,a))^2}{\rho_h(s,a)(\Delta(\pi) \vee \epsilon \vee\Delta_{\min})^2}+ \max_{\pi\in\PiD}\sum_{s,a,h} \frac{2p_h^{\pi^\epsilon}(s,a)^2}{\rho_h(s,a)(\Delta(\pi) \vee \epsilon \vee\Delta_{\min})^2}\nonumber\\
&= \max_{\pi\in\PiD}  \sum_{s,a,h} \frac{2(p_h^\pi(s,a) - p_h^{\pi^\epsilon}(s,a))^2}{\rho_h(s,a)(\Delta(\pi) \vee \epsilon \vee\Delta_{\min})^2} + \sum_{s,a,h} \frac{ 2p_h^{\pi^\epsilon}(s,a)^2}{\rho_h(s,a)(\epsilon \vee\Delta_{\min})^2}.
\end{align}
Now let us define $\rho^{0} := \argmin_{\rho \in \Omega} \max_{\pi\in\PiD} \sum_{s,a,h} \frac{(p_h^\pi(s,a) - p_h^{\pi^\epsilon}(s,a))^2}{\rho_h(s,a)(\Delta(\pi) \vee \epsilon \vee\Delta_{\min})^2}$ and $\widetilde{\rho}^1 := \frac{\rho^{0}+ p^{\pi^\epsilon}}{2}$. Then we have that
\begin{align*}
C(\cM,\epsilon) &=  \min_{\rho \in \Omega} \max_{\pi\in\PiD} \sum_{s,a,h} \frac{p_h^\pi(s,a)^2}{\rho_h(s,a)(\Delta(\pi) \vee \epsilon \vee\Delta_{\min})^2}\\
&\leq \max_{\pi\in\PiD} \sum_{s,a,h} \frac{p_h^\pi(s,a)^2}{\widetilde{\rho}_h^1(s,a)(\Delta(\pi) \vee \epsilon \vee\Delta_{\min})^2}\\
&\stackrel{(a)}{\leq} \max_{\pi\in\PiD} \sum_{s,a,h} \frac{2(p_h^\pi(s,a) - p_h^{\pi^\epsilon}(s,a))^2}{\widetilde{\rho}_h^1(s,a)(\Delta(\pi) \vee \epsilon \vee\Delta_{\min})^2} + \sum_{s,a,h} \frac{ 2p_h^{\pi^\epsilon}(s,a)^2}{\widetilde{\rho}_h^1(s,a)(\epsilon \vee\Delta_{\min})^2}\\
&\stackrel{(b)}{\leq} \max_{\pi\in\PiD} \sum_{s,a,h} \frac{4(p_h^\pi(s,a) - p_h^{\pi^\epsilon}(s,a))^2}{\rho_h^0(s,a)(\Delta(\pi) \vee \epsilon \vee\Delta_{\min})^2} + \sum_{s,a,h} \frac{ 4p_h^{\pi^\epsilon}(s,a)^2}{p_h^{\pi^\epsilon}(s,a)(\epsilon \vee\Delta_{\min})^2}\\
&= 4\min_{\rho \in \Omega} \max_{\pi\in\PiD} \sum_{s,a,h} \frac{(p_h^\pi(s,a) - p_h^{\pi^\epsilon}(s,a))^2}{\rho_h(s,a)(\Delta(\pi) \vee \epsilon \vee\Delta_{\min})^2} + \frac{4H}{(\epsilon\vee\Delta_{\min})^2},
\end{align*}
where (a) uses (\ref{ineq:complexity-decomposition-1}) and (b) uses the fact that for all $(h,s,a)$, $\widetilde{\rho}_h^1(s,a) \geq \max(\rho_h^0(s,a), p_h^{\pi^\epsilon}(s,a))/2$. Since this holds for any $\pi^\epsilon$,
%Now taking the minimum over $\pi^\epsilon \in \Pi^\epsilon$ in both sides of the previous inequality proves that
\begin{align}\label{ineq:intermediate-lb-1}
C(\cM,\epsilon) &\leq 4\min_{\pi\in\Pi^\epsilon} \min_{\rho \in \Omega} \max_{\pi\in\PiD} \sum_{s,a,h} \frac{(p_h^\pi(s,a) - p_h^{\pi^\epsilon}(s,a))^2}{\rho_h(s,a)(\Delta(\pi) \vee \epsilon \vee\Delta_{\min})^2} + \frac{4H}{(\epsilon\vee\Delta_{\min})^2}\nonumber\\
&\leq 16 \min_{\pi\in\Pi^\epsilon} \min_{\rho \in \Omega} \max_{\pi\in\PiD} \sum_{s,a,h} \frac{(p_h^\pi(s,a) - p_h^{\pi^\epsilon}(s,a))^2}{\rho_h(s,a)(\Delta(\pi) + \epsilon - \Delta(\pi^\epsilon))^2} + \frac{4H}{(\epsilon\vee\Delta_{\min})^2}\nonumber\\
&= 8\cC_{\mathrm{LB}}(\cM,\epsilon) + \frac{4H}{(\epsilon\vee\Delta_{\min})^2},  
\end{align} 
where in the second inequality we used that $\Delta(\pi) + \epsilon - \Delta(\pi^\epsilon) \leq 2(\Delta(\pi) \vee \epsilon \vee\Delta_{\min})$. Combining (\ref{ineq:PEDEL-intermediate}) and (\ref{ineq:intermediate-lb-1}) proves the first inequality.

\subsection{Proof of Proposition \ref{prop:mdps-pedel-opt}}\label{app:pedel-opt}

Combining the first inequality in the sequence \eqref{ineq:intermediate-lb-1} with \eqref{ineq:PEDEL-intermediate}, we have that
  \begin{align}\label{eq:pedel-decomposition}
    \cC_{\mathrm{PEDEL}}(\cM,\epsilon) \leq 4H^5\underbrace{\min_{\pi\in\Pi^\epsilon} \min_{\rho \in \Omega} \max_{\pi\in\PiD} \sum_{s,a,h} \frac{(p_h^\pi(s,a) - p_h^{\pi^\epsilon}(s,a))^2}{\rho_h(s,a)(\Delta(\pi) \vee \epsilon \vee\Delta_{\min})^2}}_{(\star)} + \frac{4H^6}{(\epsilon\vee\Delta_{\min})^2}.
  \end{align}
  We now lower bound $(\star)$ as a function of $1/(\epsilon\vee\Delta_{\min})^2$. We have
  \begin{align*}
    (\star) &\geq \min_{\pi\in\Pi^\epsilon} \min_{\rho \in \Omega} \max_{\pi\in\PiD : \Delta(\pi) \leq \epsilon \vee \Delta_{\min}} \sum_{s,a,h} \frac{(p_h^\pi(s,a) - p_h^{\pi^\epsilon}(s,a))^2}{\rho_h(s,a)(\epsilon \vee\Delta_{\min})^2}
    \\ &\geq \min_{\pi\in\Pi^\epsilon} \max_{\pi\in\PiD : \Delta(\pi) \leq \epsilon \vee \Delta_{\min}} \sum_{h\in[H]} \min_{\rho \in \Omega} \sum_{s,a} \frac{(p_h^\pi(s,a) - p_h^{\pi^\epsilon}(s,a))^2}{\rho_h(s,a)(\epsilon \vee\Delta_{\min})^2}
    \\ &\geq \min_{\pi\in\Pi^\epsilon} \max_{\pi\in\PiD : \Delta(\pi) \leq \epsilon \vee \Delta_{\min}} \sum_{h\in[H]} \min_{\rho \in \cP(\cS\times\cA)} \sum_{s,a} \frac{(p_h^\pi(s,a) - p_h^{\pi^\epsilon}(s,a))^2}{\rho_h(s,a)(\epsilon \vee\Delta_{\min})^2}
    \\ &= \frac{1}{(\epsilon \vee\Delta_{\min})^2}\min_{\pi\in\Pi^\epsilon} \max_{\pi\in\PiD : \Delta(\pi) \leq \epsilon \vee \Delta_{\min}} \sum_{h\in[H]} \left(\sum_{s,a} |p_h^\pi(s,a) - p_h^{\pi^\epsilon}(s,a)|\right)^2
    \\ &= \frac{4}{(\epsilon \vee\Delta_{\min})^2}\min_{\pi\in\Pi^\epsilon} \max_{\pi\in\PiD : \Delta(\pi) \leq \epsilon \vee \Delta_{\min}} d(\pi^\epsilon,\pi) \geq \frac{4c}{(\epsilon \vee\Delta_{\min})^2},
  \end{align*}
  where the first equality uses that $\min_{\rho\in\cP(\cX)} \sum_{x\in\cX} \frac{f(x)}{\rho(x)} = (\sum_{x\in\cX}\sqrt{f(x)})^2$ for any non-negative function $f$. This implies that
  \begin{align*}
    \frac{4H^6}{(\epsilon\vee\Delta_{\min})^2} \leq \frac{H^6}{c}(\star).
  \end{align*}
  Plugging this into \eqref{eq:pedel-decomposition} and using that $(\star) \leq 2\cC_{\mathrm{LB}}(\cM,\epsilon)$ as in \eqref{ineq:intermediate-lb-1} concludes the proof.

\subsection{On the complexity of PEDEL in the moderate $\epsilon$ regime}\label{sec:PEDEL-cheats}

PEDEL has a loop structure where at each iteration it seeks to halve the precision of its estimate of the value for all the policies that are still active. Taking a closer look into the design of PEDEL, we notice that it starts the first iteration with the parameter $\ell_0 = \lceil \log_2 \frac{d^{3/2}}{H} \rceil$ and ends at $\lceil \log \frac{4}{\epsilon} \rceil$. From Theorem 7 in \cite{Wagenmaker22linearMDP}, we get that the number of episodes played during the initial iteration is 
\begin{align*}
  &\cO\bigg(H^4 \sum_{h=1}^H \frac{\inf_{\Lambda_{exp}\in \Omega_h} \max_{\phi\in\Phi} \norm{\phi}_{\Lambda_{exp}^{-1}}}{\epsilon_{exp}}\bigg),\ \textrm{where}\quad \epsilon_{exp} := \frac{\epsilon_{\ell_0}^2}{\beta_{\ell_0}},\\
  &\epsilon_{\ell_0} := 2^{-\ell_0} = \frac{H}{d^{3/2}},\ \beta_{\ell_0} := 64H^2 \log(\frac{4H^2 |\Pi|\ell_0^2}{\delta}).  
\end{align*}
As a consequence, running just the initial iteration of PEDEL requires the number of episodes 
\begin{align*}
    \cC_0 := \cO\bigg(d^3 H^4\log(|\Pi|/\delta) \min_{\rho \in \Omega} \max_{\pi\in\PiD} \sum_{s,a,h} \frac{p_h^\pi(s,a)^2}{\rho_h(s,a)} \bigg)\;.
\end{align*}

When $\epsilon = \Omega(1/d)$, we have that $d^2 =\Omega( \frac{1}{(\epsilon\vee\Delta(\pi)\vee \Delta_{\min})^2})$ for all policies $\pi$ so that
\begin{align*}
    \cC_0 = \Omega\bigg(d H^4 \log(|\Pi|/\delta) \min_{\rho \in \Omega} \max_{\pi\in\PiD} \frac{\sum_{s,a,h} \frac{p_h^\pi(s,a)^2}{\rho_h(s,a)}}{(\epsilon\vee\Delta(\pi)\vee \Delta_{\min})^2} \bigg).
\end{align*}

Therefore when $\epsilon = \Omega(1/SAH)$, we get that the sample complexity of PEDEL for tabular MDPs satisfies
\[\tau = \Omega\left(SAH \times \cC_{\mathrm{PEDEL}}(\cM,\epsilon)\log\left(1/\delta\right)\right),\]
almost surely.

%%%%%%%%%%%%%%%%%%%%%%%%%%%%%%%%%%%%%%%%%%%%%%%%%%%%%%%%%%%%%%%%%%%%%%%%%%%%%%%
%%%%%%%%%%%%%%%%%%%%%%%%%%%%%%%%%%%%%%%%%%%%%%%%%%%%%%%%%%%%%%%%%%%%%%%%%%%%%%%
\end{document}